\newcommand{\gril}{\textsc{Gril}}
\newcommand{\zzgril}{\textsc{Zz-Gril}}
\newcommand{\qzzmod}{quasi zigzag persistence module}
\newcommand{\RKG}{\mathsf{rk}}
\newcommand{\RR}{\mathbb{R}}
\newcommand{\vp}{\mathbf{p}}
\newcommand{\zz}{\mathbb{ZZ}}
\newcommand{\z}{\mathbb{Z}}
\newcommand{\ropr}{\RR^{\mathsf{op}} \times \RR}
\newcommand{\uu}{\mathbb{U}}
\newcommand{\uuu}{\mathbb{UU}}
\renewcommand{\varprojlim}{\mathsf{lim}}
\renewcommand{\varinjlim}{\mathsf{colim}}
\theoremstyle{plain}
\newtheorem{theorem}{Theorem}[section]
\newtheorem{proposition}[theorem]{Proposition}
\newtheorem{lemma}[theorem]{Lemma}
\theoremstyle{definition}
\newtheorem{definition}[theorem]{Definition}
\theoremstyle{remark}
\newcommand{\propositionofref}{}
\newenvironment{propositionof}[1]
 {\renewcommand{\propositionofref}{#1}\zpropositionof}
 {\zpropositionof}
 \newcommand{\theoremofref}{}
\newenvironment{theoremof}[1]
 {\renewcommand{\theoremofref}{#1}\ztheoremof}
 {\ztheoremof}
\title{Quasi Zigzag Persistence: A Topological Framework for Analyzing Time-Varying Data}
\author{%
  Tamal K. Dey \\
  Department of Computer Science\\
  Purdue University\\
  West Lafayette, IN \\
  \texttt{tamaldey@purdue.edu} \\
  \And
  Shreyas N. Samaga \\
  Department of Computer Science \\
  Purdue University \\
  West Lafayette, IN \\
  \texttt{ssamaga@purdue.edu} \\
}
\begin{document}

\maketitle

\begin{abstract}
  In this paper, we propose Quasi Zigzag Persistent Homology (QZPH) as a framework for analyzing time-varying data by integrating multiparameter persistence and zigzag persistence. To this end, we introduce a stable topological invariant that captures both static and dynamic features at different scales. We present an algorithm to compute this invariant efficiently. We show that it enhances the machine learning models when applied to tasks such as sleep-stage detection, demonstrating its effectiveness in capturing the evolving patterns in time-varying datasets.
\end{abstract}

\section{Introduction}

Time varying data analysis~\cite{tsa,tsa2} has been a fundamental challenge in the machine learning community, ranging from traditional time-series data to more complex structures such as sequences of graphs or point clouds. While traditional time-series have been handled effectively~\cite{mvtsgnn}, the complexity of modern applications necessitates novel methodologies. Spatiotemporal Graph Neural Networks~\cite{stgcn,oreshkin2021,kan2022,chu2023} and specialized architectures for point cloud sequences~\cite{pcseq,fan2021pstnet,huang2021spatio,rempe2020caspr} have emerged as powerful tools. However, most of these approaches utilize local geometric information, potentially missing crucial global patterns. This is particularly apparent where the overall shape carries significant meaning, such as in brain connectivity patterns. Topological methods offer a compelling solution by capturing these global, multi-scale structures. By augmenting models with topological information, we can leverage both local geometric patterns and global characteristics for improved performance.

Recently, Topological Data Analysis (TDA) has become a prominent field for leveraging such hidden information. Persistent Homology (PH), a cornerstone of TDA, provides a succinct method to extract \emph{multiscale} topological features. This has been transformative in enhancing machine learning models. Analyzing time-varying data requires extending classical PH in two directions. First, the temporal component necessitates an additional parameter, bringing multiparameter persistence homology (MPH)~\cite{BL23} into the picture. Second, standard PH computations use monotone filtrations which cannot accommodate the deletions required for time-varying data. This calls for zigzag persistent homology (ZPH)~\cite{carlsson2010zigzag}, which captures dynamic topological features in time-series data~\cite{munch23, gel21, zz_visnet,HL24,coskunuzer2024time,beltramo2022euler}. Effectively, we need a combination of MPH and ZPH, allowing standard PH filtration along one parameter and ZPH along another. This requires structuring the underlying partially-ordered set (poset) as a \emph{quasi-zigzag poset}, leading to what we term \emph{Quasi Zigzag Persistent Homology (QZPH)}, which introduces a new set of challenges.

While MPH captures rich information, it suffers from the lack of a complete invariant, motivating the search for other informative, though incomplete, invariants~\cite{Multipers_landscapes, Multipers_Kernel_Kerber, Carriere_Multipers_Images, diff_signed_barcodes_24, vect_signed_barcodes_23, graphcode, gril23, dgril24}. The application of ZPH to MPH requires zigzag generalizations of these methods.


In this paper, we find that one such method, \gril{}~\cite{gril23} , which computes a landscape function~\cite{bubenik2015perslandscapes} using generalized ranks of intervals~\cite{GenRankKim21}, adapts naturally to the QZPH framework. This adaptation leads to our key contribution, \zzgril{}, a new topological invariant that extends the \gril{} framework to capture multiscale topological information in time-evolving data. On the theoretical front, we prove the stability of \zzgril{} and show that the generalized rank over a specific type of subposet can be computed efficiently using an algorithm from~\cite{DKM24}. This allows us to devise a practical algorithm for computing \zzgril{}. We demonstrate its value by augmenting machine learning models for tasks like sleep-stage detection from ECG and action classification from multivariate time-series, showing improved performance (Section~\ref{sec:exp}). \footnote{Code is available at \url{https://github.com/TDA-Jyamiti/zzgril}}.

A recent paper~\cite{flammer24} considers a similar setup for visualizing data with \emph{spatiotemporal persistence landscapes}. While the setup is similar, our work differs significantly. Notably, \cite{flammer24} use simple rectangular intervals, allowing a direct modification of a result in~\cite{DKM24}. In contrast, we use more general subposets, for which we prove Theorem~\ref{thm:compute_zzgril}, enabling the computation of generalized ranks for zigzag modules and leading to a different overall algorithm. We also provide an efficient algorithm to construct a quasi-zigzag bi-filtration from raw time-varying data and develop a complete ML pipeline that we validate on real-world datasets not considered in~\cite{flammer24}.


\section{Overview}
\label{sec:overview}

We begin with essential topological concepts. A \emph{simplicial complex} $K$ is a collection of subsets (simplices) of a finite vertex set $V$, such that if a simplex $\sigma$ is in $K$, then all its subsets are also in $K$. A (non-zigzag) \emph{filtration} $\mathcal F$ is a nested sequence of simplicial complexes indexed by
integers:
$\mathcal{F}: K_0\hookrightarrow K_1\hookrightarrow\cdots\hookrightarrow K_n$. If some inclusions are reversed, we get a \emph{zigzag filtration}, e.g., $\mathcal{Z}: K_0\hookrightarrow K_1\hookleftarrow \cdots\hookrightarrow K_n$.



Extending the indexing to two parameters such as a grid of integers
$\mathbb{Z}^2$, we get a bi-filtration (possibly zigzag). We will be interested
in a bi-filtration where the filtration in one direction, say $y$-direction,
is non-zigzag and the filtration in the other, $x$-direction, is zigzag.
For a moving point cloud data (PCD) such filtrations arise naturally. For example, consider
the case in Figure~\ref{fig:ZZ-PCD}. The moving PCD with 3 points are
shown at the bottom row. Along the $y$-direction, we increase the threshold
$\delta>0$ so that any two points are joined by an edge if the distance
between them is no more than $\delta$. Then, along $y$-direction, for every
PCD, we get a non-zigzag filtration whereas along $x$-direction, for every
fixed $\delta$, we get a zigzag filtration. We call this mixing of zigzag
and non-zigzag a \emph{quasi zigzag bi-filtration}.


\begin{figure}
    \centering
    \includegraphics[scale=0.6]{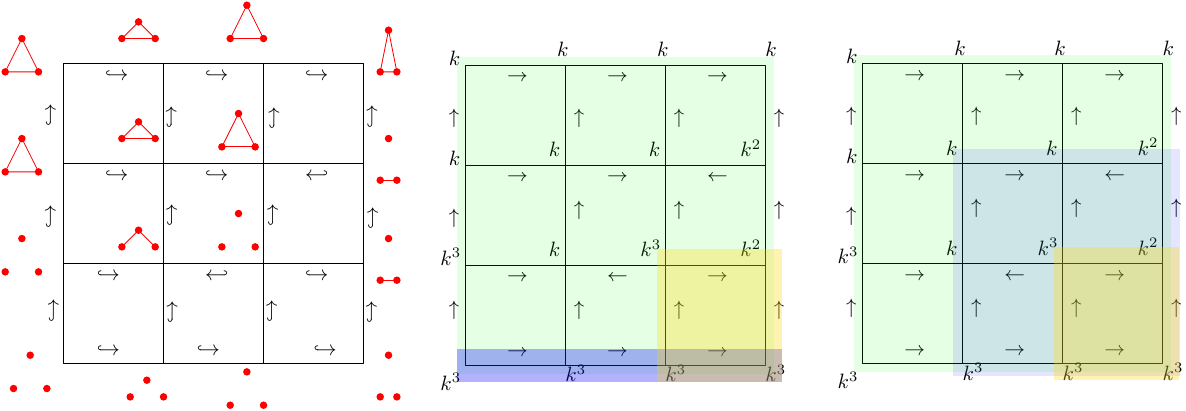}
    \caption{(left) A PCD (bottom row) and a resulting quasi zigzag bi-filtration; (middle) corresponding quasi zigzag persistence module and three intervals (light blue, yellow, green); (right) a rectangle (light yellow) is expanded
    for obtaining landscape width.}
    \label{fig:ZZ-PCD}
\end{figure}


The indexing set of a filtration is a partially ordered set (poset). A non-zigzag filtration can be indexed by the integers $\z$, while the quasi zigzag bi-filtration is indexed by a product poset $\zz \times \z$, where $\zz$ denotes a zigzag poset that arises in our case(see
Definition~\ref{def:zzposet}). Applying simplicial homology functor (e.g., $H_p$) to each complex in the filtration yields a \emph{persistence module}: a collection of vector spaces connected by linear maps according to the inclusions. For a quasi zigzag bi-filtration, this results in a \emph{quasi zigzg persistence module}. In Figure~\ref{fig:ZZ-PCD} (right), we show the
quasi zigzag persistence module for zeroth homology group $H_0(\cdot)$ that represents
the number of components.


\begin{definition}[Persistence module]
    A persistence module over a poset $P$ is a functor $M \colon P \to \mathbf{vec}$, i.e., a collection of vector spaces $\{M_\mathbf{p}\}_{\mathbf{p}\in P}$ along with linear maps $M_{\mathbf{p} \to \mathbf{q}} \colon M_{\mathbf{p}} \to M_{\mathbf{q}}$ for every $\mathbf{p} \leq \mathbf{q}$ in $P$. Here, $\mathbf{vec}$ denotes the category of finite dimensional vector spaces.
    \label{def:qzzpersmod}
\end{definition}

Next, we introduce the concept of \emph{intervals}. A finite poset $P$ is \emph{connected} if for every pair $p,q \in P$ there is a sequence of points $p=r_1,\ldots, r_t=q$ in $P$ so that $r_i \leq r_{i+1}$ or $r_{i+1}\leq r_i$ for all $i\in \{1,\ldots,t-1\}$. An \emph{interval} $I \subseteq P$ is a connected subposet that is convex w.r.t. poset order, i.e., if $p,q\in I$ and $p\leq r\leq q$, then $r \in I$.  Figure~\ref{fig:l_worm} depicts an interval in $\z^2$.


Our framework relies on the \emph{generalized rank}
~\cite{GenRankKim21} of a persistence module over a subposet $I \subseteq P$, which can be computed for $2$-parameter persistence modules with the algorithm~\cite{DKM24}. The generalized
rank of the module restricted to $I$ measures the multiplicity of the homological
features (\# of independent homological classes) that have support over the entire poset $I$. For example, in Figure~\ref{fig:ZZ-PCD}, the generalized rank, for $H_0$, over the entire poset (light green) is 1 as one component survives throughout, while the generalized rank over the bottom row (light blue) is 3 and that over the bottom right square (light yellow) is 2. 
\begin{definition}[Generalized rank]
    Let $M :P \to \mathbf{vec}$ be a persistence module, where $P$ is a finite connected poset. Let $M|_I$ denote the restriction of $M$ to a subposet $I$ of $P$. Then, the \emph{generalized rank} of $M$ over $I$ is defined as the rank of the canonical linear map from $\varprojlim M|_I$ to $\varinjlim M|_I$
    \begin{eqnarray*}
        \RKG^M(I) \coloneqq \text{rank} \left ( \varprojlim M|_I \to \varinjlim M|_I \right ).
    \end{eqnarray*}
\end{definition}
 We refer the reader to~\cite{Saunders_Maclane_Cat_Theory} for the definitions of limit, colimit, and the construction of the canonical limit-to-colimit map.

A key property of generalized rank is its \textbf{monotonicity}: $\RKG^M(I) \leq \RKG^M(J)$ for all $J \subseteq I$, where $I \text{ and } J$ are subposets in $P$. We use this to define \zzgril{}.

To extract the topological information from a \qzzmod{}, we cover the quasi zigzag poset with a specific type of subposets called \emph{worms}. A worm has a notion of a \emph{center} and a \emph{width}. Definition~\ref{def:worm} gives precise definitions of these terms. We expand each of these worms, i.e., increase the widths while keeping the centers fixed. The monotonicity of the generalized rank ensures that the rank can only decrease by this expansion. In Figure~\ref{fig:ZZ-PCD}(right) a subposet (rectangle) is expanded maximally to decrease the generalized rank from $3$ to $1$. The width for which the rank drops below a chosen threshold is taken as the \zzgril{} value. Defintion~\ref{def:zzgril} makes this concept precise. The set of \zzgril{} values at the chosen centers makes a vector that captures the topological information in the \qzzmod.

\section{\zzgril}\label{sec:zzgril}
In this section, we introduce concepts and definitions required for \zzgril. Then, we define \zzgril{} and discuss its theoretical properties. 

\begin{definition}[Zigzag poset]
    Zigzag poset $\zz$ is defined as a subposet of $\z^{\mathsf{op}} \times \z$ given by 
    \begin{equation*}
        \zz \coloneqq \{ (i,j) \colon i \in \z , j \in \{i, i-1\} \},
    \end{equation*}
    where $\z^{\mathsf{op}}$ denotes the opposite poset of $\z$.
    \label{def:zzposet}
\end{definition}


Consider the poset $\zz \times \z$ with the product order, i.e., $(\mathbf{z_1}, z_2) \leq (\mathbf{w_1}, w_2)$ if $\mathbf{z_1} \leq \mathbf{w_1}$ in $\zz$ and $z_2 \leq w_2$ in $\z$. Note that $\zz \times \z$ is equivalent to $\z^2$ as sets. Thus, every subposet $I\subseteq \zz\times \z$ can be thought as a subposet $I^{\z^2}\subseteq \z^2$ that
is endowed with $\z^2$ ordering. Observe that an interval in
$\z^2$ may not remain an interval in $\zz\times \z$ and vice-versa. 

We define a \emph{quasi zigzag bi-filtration} as a collection of simplicial complexes $\{K_{\mathbf{p}}\}_{\mathbf{p} \in \zz \times \z}$, where $K_{\mathbf{p}} \subseteq K_{\mathbf{q}}$ for all comparable $\mathbf{p} \leq \mathbf{q}$. Let $\mathbf{vec}$ denote
the category of finite dimensional vector spaces.

\begin{definition}[Quasi zigzag persistence module]\label{def:qzz_mod}
    A persistence module $M \colon \zz \times \z \to \mathbf{vec}$ is called a \emph{\qzzmod}. 
\end{definition}



\begin{figure}
    \centering
    \includegraphics[scale=0.6]{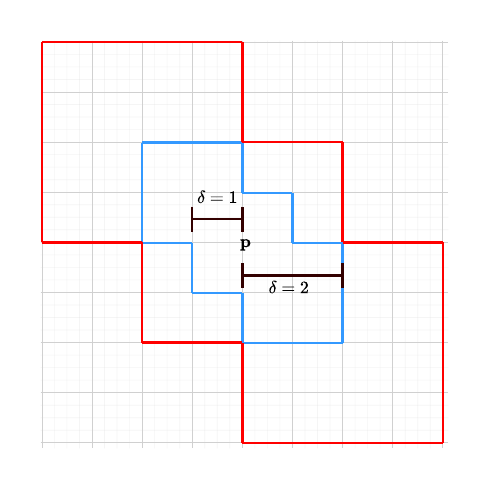}
    \vspace{-0.5cm}
    \caption{The worm with blue boundary represents a worm centered at $\vp$ with width $\delta=1$. The worm with red boundary represents an expanded worm, centered at $\vp$ with width $\delta=2$.}
    \label{fig:l_worm}
\end{figure}

We now define a special type of subposet in $\zz\times \z$ which in $\z^2$ is an $\ell$-worm introduced by~\cite{gril23} for $\ell=2$.

\begin{definition}[Worm]
    Let $\vp \in \zz \times \z$ and $\delta \in \z$ be given. Let $\boxed{\vp}_\delta$ denote the $\delta$-square centered at $\vp$, i.e., $\boxed{\vp}_\delta \coloneqq \{\mathbf{z} \in \zz\times \z :||\vp - \mathbf{z}||_\infty \leq \delta\}$. Then, a \emph{worm} centered at $\vp$ with \emph{width $\delta$} is defined as the union of the two $\delta$-squares $\boxed{\mathbf{q}}_\delta$ centered at points $\mathbf{q} = \mathbf{p}\pm(\delta, -\delta)$ on the off-diagonal line segment along with $\boxed{\vp}_\delta$. We denote the worm as $\boxed{\vp}_\delta^2$. The superscript denotes the number of $\delta$-squares in the union apart from $\boxed{\vp}_\delta$.
    \label{def:worm}
\end{definition}

We note that computing $|| \cdot ||_\infty$ in $\zz \times \z$ is equivalent to computing it in $\z^2$ using the notion of set equivalence. We can see that a worm turns out to be an interval in $\z^2$.
Refer to Figure~\ref{fig:l_worm} for an illustration of a worm.


\begin{definition}[\zzgril]
    Let $M$ be a \qzzmod. Then, the \emph{ZigZag Generalized Rank Invariant Landscape} is a function $\lambda^M :\zz \times \z \times \mathbb{N} \to \mathbb{N}$ defined as
    \begin{eqnarray*}
        \lambda^M(\mathbf{p}, k)\coloneqq \sup \left\{\delta \geq 0 :\RKG^M \left (\boxed{\mathbf{p}}^2_\delta \right)\geq k\right\}, 
    \end{eqnarray*}
    where $\vp \in \zz \times \z$.
    \label{def:zzgril}
\end{definition}

The basic idea of \zzgril{} is similar to \gril. However, we note that the underlying poset structure is very different. For computations, we consider a finite subposet of $\zz \times \z$ and cover it with worms. Then, we compute generalized rank over these worms to define the landscape function (\zzgril).

\subsection{Stability of \zzgril}
We prove the stability of \zzgril{} by showing that its perturbation is bounded by the interleaving distance between two \qzzmod s. The definition of interleaving distance, $d_{\mathcal{I}}(M,N)$, (see Definition \ref{def:interleaving} in Appendix~\ref{app:proofs}) between two zigzag persistence modules $M,N$ can be extended to \qzzmod s. 

There is an alternate notion of proximity on the space of persistence modules which uses Generalized Ranks computed on all intervals. This is known as \emph{erosion distance}~\cite{GenRankKim21}. We define a distance similar to erosion distance on the space of \qzzmod s based on their generalized ranks computed over worms. For this, we need the notion of $\epsilon$-\emph{thickening}.

Let \textbf{I}$(\zz \times \z)$ denote the collection of all subposets in $\zz \times \z$ such that their corresponding subposets in $\z^2$ are intervals. For $\epsilon \in \z_+$, the \emph{$\epsilon$-thickening} of $I$ is defined as 
\begin{equation*}
    I^\epsilon \coloneqq \left \{ \mathbf{r} \in \zz \times \z :\exists\mathbf{q} \in I \text{ such that } ||\mathbf{r} - \mathbf{q}||_\infty \leq \epsilon \right \}.
\end{equation*}

\begin{definition}
    Let $\mathcal{L}$ denote the collection of all worms in $\zz \times \z$. Let $M$ and $N$ be \qzzmod s. The erosion distance is defined as:
    \begin{equation*}
    \begin{split}
        d_\mathcal{E}^\mathcal{L}(M,N) \coloneqq \inf \limits_{\epsilon \geq 0} 
        \{ & \forall \boxed{\vp}^2_\delta \in \textbf{I}(\zz \times \z), \\
        & \RKG^M\left (\boxed{\vp}^2_\delta\right) \geq \RKG^N\left(\boxed{\vp}^2_{\delta+\epsilon}\right ) \text{ and } \\
        & \RKG^N\left (\boxed{\vp}^2_\delta\right ) \geq \RKG^M\left (\boxed{\vp}^2_{\delta+\epsilon}\right)  \}.
    \end{split}
    \end{equation*}
\end{definition}


Note that $\boxed{\vp}^2_{\delta+\epsilon}$ contains the $\epsilon$-thickening of $\boxed{\vp}^2_{\delta}$. 

\begin{proposition}\label{prop:interleaving}
    Given two \qzzmod s $M$ and $N$, $d_\mathcal{E}^{\mathcal{L}}(M,N) \leq d_\mathcal{I}(M,N)$ where $d_\mathcal{I}$ denotes the interleaving distance between $M$ and $N$.
\end{proposition}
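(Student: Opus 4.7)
The plan is to fix any $\epsilon \in \z_+$ for which $M$ and $N$ are $\epsilon$-interleaved and then establish, for every worm $\boxed{\vp}^2_\delta$, the two symmetric inequalities
\begin{equation*}
\RKG^M\bigl(\boxed{\vp}^2_\delta\bigr) \geq \RKG^N\bigl(\boxed{\vp}^2_{\delta+\epsilon}\bigr) \quad \text{and} \quad \RKG^N\bigl(\boxed{\vp}^2_\delta\bigr) \geq \RKG^M\bigl(\boxed{\vp}^2_{\delta+\epsilon}\bigr).
\end{equation*}
Taking the infimum over such $\epsilon$ will immediately yield $d_\mathcal{E}^\mathcal{L}(M,N) \leq d_\mathcal{I}(M,N)$. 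By symmetry it suffices to treat one of them.

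The core argument will exploit the fact (already observed just before the statement) that $\boxed{\vp}^2_{\delta+\epsilon}$ contains the $\epsilon$-thickening of $\boxed{\vp}^2_\delta$. An $\epsilon$-interleaving of quasi zigzag modules supplies natural transformations $\phi \colon M \to N^{\epsilon}$ and $\psi \colon N \to M^{\epsilon}$ whose compositions agree with the internal $2\epsilon$-thickening maps. Restricting $\phi$ and $\psi$ to the pair of worms $(\boxed{\vp}^2_\delta, \boxed{\vp}^2_{\delta+\epsilon})$ gives morphisms of diagrams, and the universal properties of limit and colimit then induce linear maps
\begin{equation*}
\varprojlim N|_{\boxed{\vp}^2_{\delta+\epsilon}} \longrightarrow \varprojlim M|_{\boxed{\vp}^2_\delta}, \qquad \varinjlim M|_{\boxed{\vp}^2_\delta} \longrightarrow \varinjlim N|_{\boxed{\vp}^2_{\delta+\epsilon}}.
\end{equation*}
The interleaving identities then force the canonical map $\varprojlim N|_{\boxed{\vp}^2_{\delta+\epsilon}} \to \varinjlim N|_{\boxed{\vp}^2_{\delta+\epsilon}}$ to factor through the canonical map $\varprojlim M|_{\boxed{\vp}^2_\delta} \to \varinjlim M|_{\boxed{\vp}^2_\delta}$. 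Since the rank of a composition is at most the rank of any of its factors, the desired inequality on generalized ranks follows.

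The hard part will be the poset-level bookkeeping caused by $\zz$ being a zigzag rather than a totally ordered set, so that the ``$\epsilon$-shift'' in the first coordinate is not a simple translation. I would verify component by component, using the explicit $(3,3)$-interval structure of a worm as the union of three off-diagonal $\delta$-squares, that every index of $\boxed{\vp}^2_\delta$ shifted by $\epsilon$ lies inside $\boxed{\vp}^2_{\delta+\epsilon}$ and that the triangles witnessing the interleaving reduce, after restriction, to the internal maps of the two diagrams so that the above factorization makes sense. Once this bookkeeping is completed, the rank comparison itself is a direct monotonicity-under-factorization argument, exactly as in the erosion-bound proofs for one-parameter and multiparameter persistence that underlie \gril.
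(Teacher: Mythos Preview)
Your plan hinges on having natural transformations $\phi \colon M \to N^{\epsilon}$ and $\psi \colon N \to M^{\epsilon}$ between the quasi zigzag modules themselves, but such objects do not exist in this setting. The poset $\zz$ admits no order-preserving ``shift by $\epsilon$'' endofunctor: moving every index by one step turns sources into sinks and reverses the direction of the structure maps. This is not bookkeeping that can be verified ``component by component''; it is the reason the interleaving distance on (quasi) zigzag modules is \emph{defined} in the paper (following~\cite{zz_stability}) as the interleaving distance of the block-extended modules $\Tilde{E}(M),\Tilde{E}(N)$ on $\uuu\subset \ropr\times\RR$, where genuine shift functors are available. Your proposal never mentions $\Tilde{E}$, so the interleaving morphisms you invoke live on the wrong poset.

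The paper's proof supplies exactly the missing bridge. It first observes $d_\mathcal{E}^{\mathcal{L}}\le d_\mathcal{E}$ trivially, then uses two lemmas about $\Tilde{E}$ (it preserves direct sums and sends intervals to block intervals) to show $\RKG^M(I)=\RKG^{\Tilde{E}(M)}(\Tilde{\mathfrak{i}}(I)\cap\uuu)$, hence $d_\mathcal{E}(M,N)=d_\mathcal{E}(\Tilde{E}(M),\Tilde{E}(N))$. Only at this point, on $\RR^n$-indexed modules, does it invoke the known bound $d_\mathcal{E}\le d_\mathcal{I}$ from~\cite{GenRankKim21}. Your factorization-through-limit/colimit argument is essentially the standard proof of that cited inequality, so once you insert the block extension step your idea would go through---but as written, the crucial passage to a poset where the interleaving morphisms actually make sense is missing.
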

\vspace{-0.2cm}


 Proposition~\ref{prop:interleaving} leads us to Theorem~\ref{thm:stability_zzgril}.

    

\begin{theorem}\label{thm:stability_zzgril}
    Let $M$ and $N$ be two \qzzmod s. Let $\lambda^M$ and $\lambda^N$ denote the \zzgril{} functions of $M$ and $N$  respectively. Then, 
    \begin{equation*}
        ||\lambda^M - \lambda^N||_\infty = d_\mathcal{E}^{\mathcal{L}}(M,N) \leq d_\mathcal{I}(M,N).
    \end{equation*}
\end{theorem}

    


We refer the reader to Appendix~\ref{app:proofs} for all the proofs.


\section{Algorithm}
\label{sec:alg}
We discuss the details of computing \zzgril{} in this section. We begin by proving that generalized rank over a finite subposet in $\zz \times \z$ can also be computed by computing the number of full bars in the zigzag filtration along the ``boundary'' of the  subposet. To be precise, we introduce the following concepts now. Let $P$ be any finite connected poset. We say a point $q\in P$ \emph{covers} a point $p\in P$, denoted
$p\prec q$ or $q\succ p$, if $p\leq q$ and there is no $r\in P$, $r\not= p,q$, so that $p\leq r\leq q$. A \emph{covering} path $p\sim q$ between $p$
and $q$ is either a set of points $\{p=p_0\prec p_1\prec\cdots\prec p_\ell=q\}$ 
or $\{p=p_0\succ p_1\succ\cdots\succ p_\ell=q\}$.
We define some special points in $P$. The set of minima in $P$
are the points $P_{min}=\{p\in P\ \,|\, \not\exists r\not=p\in P \mbox{ where } r\leq p\}$. Similarly, define the set of maxima $P_{max}$. For any two points
$p,q\in P$, let $p\vee q$ and $p\wedge q$ denote their least upper bound (lub)
and largest lower bound (llb) in $P$ if they exist. 

Now consider a finite subposet $I\subseteq \zz\times \z$ whose
counterpart $I^{\z^2}$ with $\z^2$ ordering becomes an interval.
Since $I^{\z^2}\subset \z^2$, its points
can be sorted with increasing $x$-coordinates in $\z^2$. 
Let $I^{\z^2}_{min}=\{p_0,\ldots, p_s\}$
and $I^{\z^2}_{max}=\{q_0,\ldots, q_t\}$ be the set of minima and maxima respectively sorted
according to their $x$-coordinates. The \emph{lower fence} and \emph{upper fence}
of $I^{\z^2}$ are defined as the paths
\vspace{-0.1cm}
\begin{eqnarray*}
L_I^{\z^2}:=p_0\sim (p_0\vee p_1)\sim p_1\sim (p_1\vee p_2)\sim p_2\sim\cdots\sim p_s\\
U_I^{\z^2}:=q_0\sim (q_0\wedge q_1)\sim q_1\sim(q_1\wedge q_2)\sim q_2\sim\cdots\sim q_t.
\end{eqnarray*}
Consider the boundary $B_I^{\z^2}$ comprising $L_I^{\z^2}$, $U_I^{\z^2}$, the paths
$p_0\sim q_0$ and $p_s\sim q_t$ going through the top left and bottom right corner points
of $I^{\z^2}$ respectively. Consider the subposet $I\subseteq \zz\times\z$ and
observe that $I_{min}, I_{max}\subseteq B_I^{\z^2}$ because all other points
have one point below and another above in the $y$-direction.
Let $\partial_L I$, $\partial_U I$, and $\partial I$ be minimal
paths in $BI^{\z^2}$ connecting all minima in $I_{min}$, all maxima in $I_{max}$, and
all minima, maxima together respectively. Drawing an analogy to \cite{DKM24} we call
$\partial I$ a \emph{boundary cap} which is drawn orange in Figure~\ref{fig:worm_qzz_bifil}.

Next, we appeal to certain results in category theory to claim a result
analogous to~\cite{DKM24} that helps computing the generalized ranks with
zigzag persistence modules. 

\begin{definition}
    A connected subposet $P'\subseteq P$ is called \emph{initial} if for every $p\in P\setminus P'$,
    the downset $\downarrow{p}=\{q\in P\,| \, q\neq p~\&~ q\leq p\}$ intersects $P'$
    in a connected poset. Similalrly, $P'$ is called \emph{terminal} if for every $p\in P\setminus P'$, the upset $\uparrow {p}=\{q\in P\,| \, q\neq p~\&~ q\geq p\}$ intersects $P'$
    in a connected poset.
\end{definition}

\begin{definition}
    \label{adef:init_functors}
    We call a functor $F: P'\rightarrow P$ for $P'\subseteq P$ \emph{initial} if $P'$ is initial.
Similarly, we call $F: P\rightarrow P'$ \emph{terminal} if $P'$ is terminal.
\end{definition}

Treating $I$ as a category with points in $I$ as objects and relations as morphisms, we get $\partial_LU$ and $\partial_UI$ as subcategories. Then, we get functors $F_L \colon \partial_LI \to I$ induced by inclusion and $F_U \colon I \to \partial_UI $ induced by projection.

\begin{proposition}
\label{aprop:init_functor}
    $F_L: \partial_L I\rightarrow I$  is an initial functor and $F_U: I\rightarrow \partial_U I$ is a terminal functor.
\end{proposition}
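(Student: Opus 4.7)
The plan is to verify the defining conditions of initial and terminal functors by describing $\partial_L I$ and $\partial_U I$ explicitly and then tracking $\zz \times \z$-downsets and upsets through them.

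First, I would show that every $\zz \times \z$-minimum of $I$ is the bottom of its $\zz$-column in $I$: if $p = (x_p, y_p) \in I_{min}$ then $(x_p, y_p - 1) \notin I$, else it would be strictly below $p$. Such bottom-of-column points all lie on the $\z^2$-lower fence $L_I^{\z^2}$. Consequently $\partial_L I$, being the minimal path in $B_I^{\z^2}$ through $I_{min}$, coincides with the sub-path of $L_I^{\z^2}$ running from the leftmost to the rightmost $\zz \times \z$-minimum. A short check shows that any two adjacent points along an $L_I^{\z^2}$ step are related by a covering relation in $\zz \times \z$ as well (vertical steps are covering in the $y$-component, while horizontal steps across adjacent $\zz$-columns are covering because one column is a valley and the other a peak), so $\partial_L I$ is a connected subposet. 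A dual argument yields $\partial_U I$ as a connected sub-path of $U_I^{\z^2}$ passing through all $\zz \times \z$-maxima.

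For initiality of $F_L$, I fix $p \in I \setminus \partial_L I$ and verify that $\downarrow p \cap \partial_L I$ is non-empty and connected. Non-emptiness is immediate: since $p \notin I_{min}$, any maximal descending chain from $p$ terminates at a $\zz \times \z$-minimum lying in $\partial_L I$. For connectedness, I split on whether the first coordinate $x_p$ of $p$ is a $\zz$-valley or a $\zz$-peak. If $x_p$ is a valley, then the $\zz \times \z$-downset of $p$ is confined to the single column $x_p$, and its intersection with $L_I^{\z^2}$ is a (possibly single-point) contiguous vertical sub-segment. If $x_p$ is a peak, then the downset additionally includes the two adjacent valley columns $x_p \pm 1$ up to height $y_p$; because $L_I^{\z^2}$ is a single monotone staircase (non-increasing in $y$ as $x$ grows), its intersection with this three-column strip and with the half-plane $\{y \leq y_p\}$ is again a contiguous sub-staircase. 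In either case, intersecting this contiguous piece of $L_I^{\z^2}$ with the sub-path $\partial_L I$ (also contiguous in $L_I^{\z^2}$) yields a contiguous sub-sub-path, which is connected. A completely symmetric analysis, interchanging the roles of valleys and peaks and of downsets and upsets, establishes terminality of $F_U$ by showing that $\uparrow p \cap \partial_U I$ is non-empty and connected for every $p \in I \setminus \partial_U I$.

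The main delicacy lies in the peak case of initiality (and dually the valley case of terminality), where the $\zz \times \z$-downset spans three consecutive $\zz$-columns. One must ensure that the portion of $L_I^{\z^2}$ inside this three-column strip is itself a single connected sub-path, even when $\partial_L I$ skips some intermediate columns that contain no $\zz \times \z$-minimum. This reduces to checking that $L_I^{\z^2}$ restricted to a contiguous range of $x$-columns is itself a connected portion of the staircase, which follows from the interval/convex structure of $I^{\z^2}$ and the definition of $L_I^{\z^2}$ as a single monotone staircase joining consecutive $\z^2$-minima.
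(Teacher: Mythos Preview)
Your proposal is correct and follows essentially the same route as the paper: both arguments reduce to the valley/peak dichotomy on the $\zz$-coordinate of $p$, observe that $\downarrow p$ is confined to one column in the valley case and to three adjacent columns in the peak case, and then argue that the intersection with $\partial_L I$ is connected because a path in $\z^2$ cannot skip over an adjacent column. Your write-up is somewhat more explicit than the paper's---you spell out non-emptiness, identify $\partial_L I$ concretely as a sub-path of the lower fence, and phrase the peak case via the monotone-staircase structure rather than the paper's terse ``no column strictly between'' observation---but the underlying mechanism is the same.
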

\begin{proof}
    It is sufficient to prove that $\partial_L I$ is initial for $F_L$ and $\partial_U I$ is terminal for $F_U$. We only show that $\partial_L I$ is initial and the proof
    for $\partial_U I$ being terminal is similar.

    Let $p\in I\setminus \partial_L I$ be any point. Let $p^-$ and $p^+$ be two points (if exist) where $p^-\prec p$ and $p^+ \succ p$ in $\z^2$ with $p^-_x < p_x < p^+_x$. By definition of the quasi zigzag poset, either $p^- > p$ and $p^+ > p$, or $p^- < p$ and
    $p^+ < p$ in the poset $\zz\times \z$.

    In the first
    case, the downset $\downarrow p$ consists of all points $p'\neq p$ that are vertically below $p$, that is, 
    $p'_x=p_x$ and $p'_y< p_y$. This means $\downarrow p$ intersects $\partial_L I$
    in a connected poset $p'_0 <\cdots < p'_k$ where each $p'_i$ has the same $x$-coordinate as $p$.

    In the second case, the downset $\downarrow p$ consists of three sets of points $Y^{p^-}$, $Y^p$, and $Y^{p^+}$  
    that are vertically below $p^-$, $p$, and $p^+$ respectively. Each of $Y^{p^-}$,
    $Y^p$ and $Y^{p^+}$ intersects $\partial_L I$ in a connected poset. They can be
    disconnected as a whole only if there is a point $q\in \partial_L I$ 
    so that
    $p^-_x < q_x < p_x$ or $p_x < q_x < p^+_x$. But, neither is possible because
    $p^-$ and $p^+$ are points in $I$ covered by $p$.    
\end{proof}


The following result connecting initial (terminal) functor to the limit (colimit)
is well known, [Chapter 8]~\cite{Riehl14}. 

\begin{proposition}
    Let $M: I\rightarrow {\bf vec}$ be a persistence module and $F_L: \partial_L I\rightarrow I$
    and $F_U: I\rightarrow \partial_U I$ be initial and terminal functors. Then, there
    is an isomorphism $\phi: \varprojlim{M}\rightarrow \varprojlim{M|_{\partial_L I}}$ from the limit of $M$ to the limit of the restricted module $M|_{\partial_L I}$. Similarly, we have an isomorphism for colimits, $\psi: \varinjlim{M|_{\partial_U I}}\rightarrow \varinjlim{M}$.
    \label{prop:fences}
\end{proposition}

\begin{proposition}
    $\partial (\partial_L I)=\partial_L I$ and $\partial (\partial_U I)=\partial_U I$.
    \label{prop:boudnarycap}
\end{proposition}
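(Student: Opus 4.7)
The plan is to unpack the 1-dimensional structure of $\partial_L I$ (and symmetrically $\partial_U I$) and show that the boundary-cap construction on a zigzag path returns the path itself. First I would write $\partial_L I$ explicitly: by its definition it is the minimal path in $B_I^{\z^2}$ running through the sorted minima $p_0,\ldots,p_s$ of $I$, and along the way it must pass through intermediate points that are the joins $p_i\vee p_{i+1}$ in $\zz\times\z$ (these are exactly the ``peaks'' between two consecutive minima, because all points of $\partial_L I$ strictly between $p_i$ and $p_{i+1}$ on the lower fence lie above one of them). Thus as a subposet of $\zz\times\z$, the minima of $\partial_L I$ are precisely $\{p_0,\ldots,p_s\}$ and its maxima are precisely $\{p_i\vee p_{i+1}\}_{i=0}^{s-1}$.

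Next I would compute the lower and upper fences of $\partial_L I$ itself. Because $\partial_L I$ is a zigzag path in which each minimum is covered by at most two maxima and vice versa, the join of two consecutive minima of $\partial_L I$ is already a point lying on $\partial_L I$ (namely $p_i\vee p_{i+1}$); symmetrically, the meet of two consecutive maxima $p_{i-1}\vee p_i$ and $p_i\vee p_{i+1}$ is $p_i$, again on $\partial_L I$. Therefore both the lower fence and the upper fence of $\partial_L I$, as defined just before Proposition~\ref{prop:boudnarycap}, coincide set-wise with $\partial_L I$. Since the path also contains its left- and right-most corner paths trivially (the end minima are themselves the endpoints), the boundary $B^{\z^2}_{\partial_L I}$ equals $\partial_L I$.

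From here the claim is essentially formal: the boundary cap $\partial(\partial_L I)$ is by definition a minimal path inside $B^{\z^2}_{\partial_L I}=\partial_L I$ that passes through all minima and maxima of $\partial_L I$. But $\partial_L I$ is itself such a path and cannot be shortened without dropping one of the $p_i$ or one of the joins $p_i\vee p_{i+1}$, so $\partial(\partial_L I)=\partial_L I$. The argument for $\partial(\partial_U I)=\partial_U I$ is dual, replacing joins by meets and the sorted maxima $q_0,\ldots,q_t$ for the sorted minima.

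The main obstacle I anticipate is not the topological content but the bookkeeping between the two order structures: a covering relation in $\zz\times\z$ need not look like one in $\z^2$, and one must make sure that the intermediate join and meet points listed above genuinely lie in $I$ (and hence in $\partial_L I$), not merely in $\z^2$. This follows from the fact that $\partial_L I$ is constructed inside $B_I^{\z^2}$ and that $I$ is convex and connected, but I would spell it out by a small case analysis on whether a point $p\in\partial_L I\setminus (I_{min}\cup I_{max})$ has its covering neighbours above or below it in $\zz\times\z$, mirroring the case split used in the preceding proof that $F_L$ is initial.
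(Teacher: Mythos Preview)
The paper gives no proof here; the proposition is introduced with the sentence ``Following proposition is immediate.'' Your argument is correct in substance and supplies the detail the paper omits: the key observation is that $\partial_L I$ is already a one-dimensional zigzag path, so its own lower and upper fences collapse onto it and the boundary-cap operator acts as the identity on such paths. That is precisely what the authors mean by ``immediate.''

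One notational point worth tightening: you refer to $p_0,\ldots,p_s$ as ``the sorted minima of $I$,'' but in the paper those symbols are reserved for the minima of $I^{\z^2}$ in the $\z^2$ order, whereas $\partial_L I$ is by definition the minimal path in $B_I^{\z^2}$ through $I_{min}$, the minima of $I$ in the $\zz\times\z$ order; these two sets need not coincide. This does not damage your argument---what matters is that $\partial_L I$ is a zigzag path whose own extrema alternate along it, regardless of how they relate to the $p_i$---but you should relabel to avoid conflating the two orderings, which is exactly the bookkeeping concern you flag in your final paragraph.
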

\begin{proof}
    The proof is immediate from the definitions.
\end{proof}
For computations, we bank on the following result which
extends Theorem 3.1 in~\cite{DKM24} from $2$-parameter persistence modules to the quasi zigzag persistence modules.

\begin{theorem}\label{thm:compute_zzgril}
    Let $M$ be a quasi zigzag persistence module and
    $I$ be a finite interval in the corresponding quasi zigzag poset. Then, 
    $\RKG^M(I)=\RKG^{M}(\partial I)$.
\end{theorem}
\begin{proof}
   Let $r$ denote the map $r: \varprojlim{I}\rightarrow \varinjlim{I}$, and
   $\phi: \varprojlim{I}\rightarrow \varprojlim{\partial_L I}$ and
   $\psi: \varinjlim{\partial_U I}\rightarrow \varinjlim{I}$ be the isomorphisms guaranteed by Proposition~\ref{prop:fences}. Observe that $\RKG^M(I)= \mathrm{rank}(\psi^{-1}\circ r \circ \phi^{-1})$. Now let $r'$ denote the map
   $r': \varprojlim{\partial I}\rightarrow \varinjlim{\partial I}$. Consider the isomoprhisms 
   $\phi': \varprojlim{\partial I}\rightarrow \varprojlim{\partial (\partial_L I)}$ and
   $\psi': \varinjlim{\partial(\partial_U I)}\rightarrow \varinjlim{\partial I}$ again guaranteed by Proposition~\ref{prop:fences}. Then,
   $\RKG^M(\partial I)=\mathrm{rank}(\psi'^{-1}\circ r'\circ \phi'^{-1})$. But,
   $\psi^{-1}\circ r \circ \phi^{-1}=\psi'^{-1}\circ r'\circ \phi'^{-1}$ because of
   Proposition~\ref{prop:boudnarycap}.
\end{proof}
With this background, the pipeline for computing \zzgril{} in the QZPH framework
can be described as follows.
Suppose that we are given sequential data (sequence of point clouds, sequence of graphs, multivariate time series) with vertex-level correspondences between consecutive time steps. First, we build a quasi zigzag bi-filtration out of this data where time
increases in $x$-direction and the threshold for constructing complexes increases
in $y$-direction. The algorithm for building this bi-filtration out of raw data
is described in section~\ref{sec:building_qzz_bifil}
where we make certain choices to make it efficient.
Each simplicial complex in the bi-filtration is indexed by a finite grid $G$ in $\z^2$ because $\zz \times \z$ is equivalent to $\z^2$ as sets. 
We sample a set of center points $S$ from $G$ and compute \zzgril{} (Definition~\ref{def:zzgril}) at each of these center points. Taking advantage of Theorem~\ref{thm:compute_zzgril}, we compute \zzgril{} by computing
the zigzag filtration along the boundary cap (a path) of a worm centering each point
in $S$ and then computing the number of full bars in the corresponding zigzag persistence module obtained by applying the homology functor.



A sequence of point clouds (Figure~\ref{fig:ZZ-PCD}) or graphs (Figure~\ref{fig:worm_qzz_bifil}) or multivariate time-series data (refer section~\ref{sec:exp} for converting multivariate time-series to a sequence of point clouds or graphs) gives us a collection of simplicial complexes. Every simplex in each simplicial complex is assigned a weight which is derived from the input. We build a quasi zigzag bi-filtration from this collection of simplicial complexes.


\subsection{Building Quasi Zigzag Bi-filtration}
\label{sec:building_qzz_bifil}
We explain the algorithm by considering an example of sequential graph data, as shown in Figure~\ref{fig:seq_of_graphs}. We assume that each graph has edge-weights. In the top row of Figure~\ref{fig:seq_of_graphs}, notice that the first two graphs can not be linked by an inclusion in any direction, i.e., neither is a subgraph of the other. This is because, there is an inclusion of an edge $(v_2,v_3)$ as well as a deletion $(v_1,v_4)$. We circumvent this problem by clubbing all inclusions together followed by all deletions. This is equivalent to considering the union of the two graphs as the intermediary step and then deleting the edges which are not present in the previous graph. Refer to the bottom row of Figure~\ref{fig:seq_of_graphs}, where the intermediary union graphs are shown along with the original graphs.

A sequence of $T$ number of graphs is converted into a zigzag filtration $\mathcal{Z}_{L}$ of length $2T-1$ by the above procedure because every consecutive pair of graphs introduces a union in between.
For each graph in the zigzag filtration thus obtained, we construct its \emph{graph filtration} based on edge-weights; see e.g.~\cite{dey_wang_2022_book, edelsbrunner2010computational}. We ensure that the length of each graph filtration remains the same by considering the sublevel sets of exactly $L$ levels.
The zigzag filtration at the highest level, $\mathcal{Z}_{L}$, can be pulled back to each lower level $l$. This ensures that we get a zigzag filtration $\mathcal{Z}_l$ at each level $1 \leq l < L$ of the filtration. This gives us a quasi zigzag bi-filtration. Refer to Figure~\ref{fig:worm_qzz_bifil} for an illustation of a quasi zigzag bi-filtration.
 
While implementing this procedure, we need not compute the union graphs explicitly saving both storage and time because all necessary information is already present in the corresponding component graphs. We use the following procedure to efficiently build the quasi zigzag bi-filtration.

Construct the graph filtration $\mathcal{F}_{t_i}$ of each of the $T$ graphs in the sequential graph data, where $1\leq t_i \leq T$. Let $\sigma$ be a simplex that is inserted at the level $l$ in the filtration $\mathcal{F}_{t_i}$. 
We have three possible scenarios for $\sigma$ in $\mathcal{F}_{t_{i+1}}$:
    \begin{enumerate}
    \item $\sigma$ is inserted in $\mathcal{F}_{t_{i+1}}$ at $m$ for some $m < l$: In this case, $\sigma$ needs to be added on all the horizontal inclusion arrows $K_{t_i, w} \xhookrightarrow{} K_{t_i, w} \cup K_{t_{i+1}, w}$ for $m \leqslant w < l$.
    
    \item $\sigma$ is inserted in $\mathcal{F}_{t_{i+1}}$ at $m$ for some $m \geqslant l$: In this case, $\sigma$ needs to be added on all the horizontal inclusion arrows $K_{t_{i+1}, w} \xhookrightarrow{} K_{t_i, w} \cup K_{t_{i+1}, w}$ for $l \leqslant w < m$.
    
    \item $\sigma$ is not present in $\mathcal{F}_{t_{i+1}}$. In this case, we treat it as getting inserted at the maximum level $L$ in $\mathcal{F}_{t_{i+1}}$ and hence, will be added by the inclusion $K_{t_{i+1}, L} \xhookrightarrow{} K_{t_i, L} \cup K_{t_{i+1}, L}$.
\end{enumerate}

We repeat this procedure for the pair $\mathcal{F}_{t_i}$ and $\mathcal{F}_{t_{i-1}}$.
\begin{figure}
    \centering
    \includegraphics[scale=0.7]{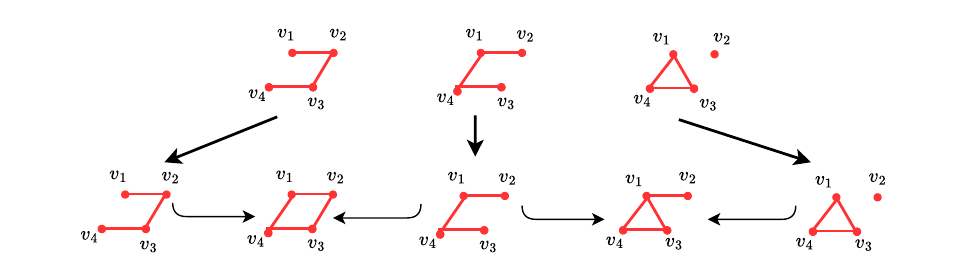}
    \caption{The top row shows a typical input to the \zzgril{} framework as a sequence of graphs. This sequence has 3 graphs. Observe that consecutive graphs cannot be linked by an inclusion in either direction because neither is a subgraph of the other. In order to circumvent this problem, we consider the unions of consecutive graphs as an intermediary step, as shown in the bottom row.}
    \label{fig:seq_of_graphs}
\end{figure}

Since we are working with an integer grid, we can simulate the quasi zigzag bi-filtration such that we have the standard filtration $\mathcal{F}_{t_i}$ at even $x$-coordinates and at odd $x$-coordinates, we have the standard filtration of the unions, without explicitly storing the unions.

\textbf{Correctness.} To prove the correctness of the procedure, we need to show 
the following: For $1\leq i,i' \leq T$ and $1\leq j,j' \leq L$, let $K_{i,j}$ denote the simplicial complex in the quasi zigzag filtration built according to the procedure mentioned above and $K'_{i,j}$ is the simplicial complex in the quasi zigzag bi-filtration built by considering unions explicitly. We show
that the new simplices inserted by the inclusion $K_{i,j} \xhookrightarrow{} K_{i',j'}$ are the same as the ones inserted by $K'_{i,j} \xhookrightarrow{} K'_{i',j'}$. 
\begin{proof}
    The statement is immediately true for the inclusions $i'= i, j' = j+1$. This is because the new simplices in the upward ($y$-direction) inclusions are the ones born at the level $j+1$ in the input family of filtrations. Now, we prove the statement for the case where $i' = i \pm 1, j' = j$, i.e., the horizontal ($x$-direction) inclusions/deletions. Note that we have inclusions from $K_{i,j} \xhookrightarrow{} K_{i',j'}$ only when $i$ is an even integer and $i' = i \pm 1$. We prove it for the case $i' = i + 1, j' = j$. The proof for the other case is similar. Let $\sigma$ be a simplex added on the inclusion $K_{i,j} \xhookrightarrow{} K_{i+1,j}$. According to the procedure described above, it means that $\sigma$ is born at some level $j_1 \leq j$ in the filtration $\mathcal{F}_{i+2}$ and is not born before level $j$ in the filtration $\mathcal{F}_i$. Hence, $\sigma$ will be newly added on the arrow $K'_{i,j} \xhookrightarrow{} K'_{i+1,j}$ because $\sigma \in K'_{i+2,j}$ and $\sigma \notin K'_{i,j}$. This is because $K_{i,j} = K'_{i,j}$ for all even integers $i$, as they are the simplicial complexes part of the input filtrations $\mathcal{F}_i$. By the same arguments, all the newly added simplices on inclusions $K'_{i,j} \xhookrightarrow{} K'_{i+1,j}$ will be newly added simplices on $K_{i,j} \xhookrightarrow{} K_{i+1,j}$.
\end{proof}


\subsection{Computing \zzgril}

\begin{figure}
    \centering
    \includegraphics[scale=0.6]{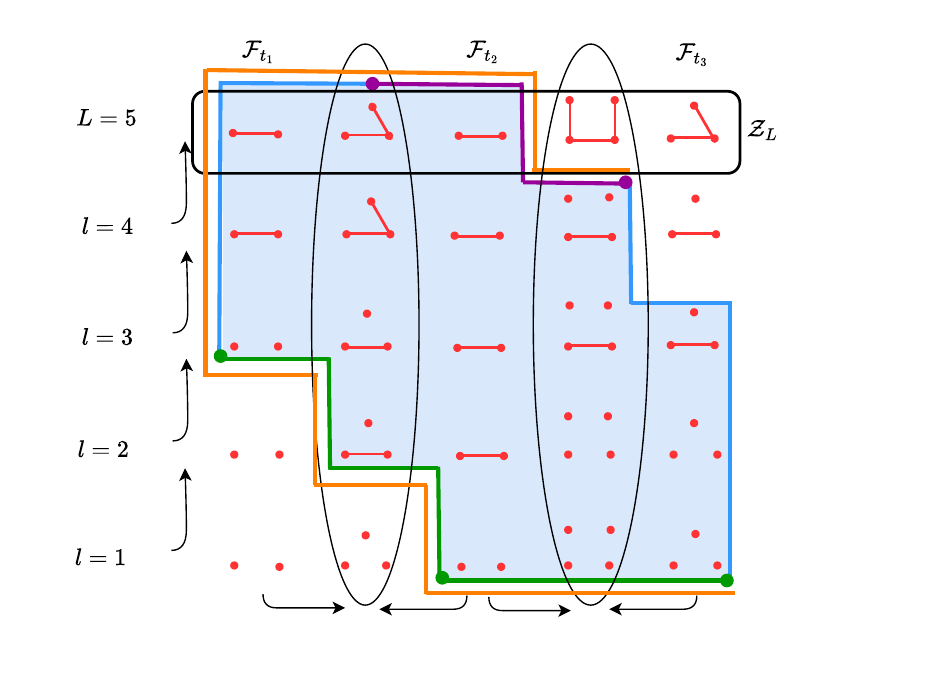}
    \caption{A worm $I$ (shaded) on a quasi zigzag bi-filtration where the direction of the inclusion on horizontal arrows are shown at the bottom and on the vertical arrows on left. The worm centered at (3,3) has width 1. The part of the boundary colored green is $\partial_LI$ connecting the minima shown as green points and the part in purple is $\partial_UI$ connecting the maxima shown as purple points. The \emph{boundary cap} is shown in orange which runs parallel to the boundary for a portion of it. Refer to Figure~\ref{fig:zz_bdry} for the zigzag filtration along the boundary cap of the worm. A sequence of graphs with $T=3$ time steps is shown with the corresponding graph filtrations: $\mathcal{F}_{t_1}, \mathcal{F}_{t_2}, \mathcal{F}_{t_3}$ each with $L=5$ levels. The filtration of the unions are  encircled by ovals. Zigzag filtration at the topmost level $\mathcal{Z}_L$ is shown in a rectangular box.}
    
    \label{fig:worm_qzz_bifil}
\end{figure}

\begin{figure*}
    \centering
    \includegraphics[width=\textwidth]{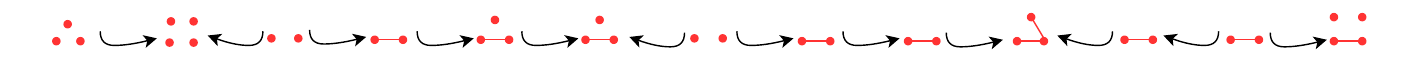}
    \caption{This is the zigzag filtration $\mathcal{Z}_{bdry}$ along the boundary cap of the worm shown in Figure~\ref{fig:worm_qzz_bifil}.}
    \label{fig:zz_bdry}
\end{figure*}
Given this quasi zigzag bi-filtration, we obtain a \qzzmod{} $M$ by considering the homology $\{H_p(K_{t_i, \alpha_j})\}_{i,j}$ of each simplicial complex. Given a sampled center point $\vp$ and the value of rank $k$, we need to compute $\lambda^M(\vp, k)$. Thus, we need to compute the maximal width of the worm such that the value of generalized rank of $M$ over the worm is at least $k$. We do a binary search over all the possible widths of the worms to arrive at such maximal width for each tuple $(\vp, k)$. To compute the generalized rank of $M$ over a worm, we need the zigzag filtration along the boundary cap of the worm, $\mathcal{Z}_{bdry}$. We utilize the stored information about which simplices to add/delete along the horizontal and vertical arrows in order to build $\mathcal{Z}_{bdry}$. Then, we compute the number of full bars in the zigzag persistence module corresponding to $\mathcal{Z}_{bdry}$ using an efficient zigzag algorithm proposed in~\cite{fzz} and use it in the binary search routine. The pseudocode of this idea is shown in Algorithm~\ref{alg:CompZZGRIL}.

\begin{algorithm}[h!]
\caption{\textsc{Compute\zzgril}}\label{alg:CompZZGRIL}
\begin{algorithmic}
\small 
   \STATE {\bfseries Input:} $\mathcal{ZZ}: \text{Quasi zigzag bi-filtration}, k\geq 1, \vp$
   \STATE {\bfseries Output:} $\lambda(\vp, k)$:  \zzgril{} value at point $\vp$ for  fixed $k$
   \STATE {\bfseries Initialize:} $\delta_{min} \gets 1, \delta_{max} \gets \text{len}(\mathcal{F})$, $\lambda \gets 1$ 
   \WHILE{$\delta_{min} \leq \delta_{max}$}
   \STATE $\delta \gets  (\delta_{min} + \delta_{max}) / 2 $
   \STATE $I \gets \boxed{\vp}^2_\delta$;~~ $r \gets$ \textsc{ComputeRank($\mathcal{ZZ}, I$)}
   
   \IF {$r \geq  k$}
        \STATE $\lambda \gets \delta$;~~ $\delta_{min} \gets \delta+1$
    \ELSE
        \STATE $\delta_{max} \gets \delta-1$ 
    \ENDIF
    \ENDWHILE
    
\STATE \textbf{return} {$\lambda$}

\end{algorithmic}
\end{algorithm}   

\subsection{Time Complexity Analysis}
\label{asubsec:complexity}
Let $N$ denote the length of the zigzag filtration at the top-most scale level, i.e., ($\mathcal{Z}_L$). For the analysis, we consider the zigzag filtration to be starting from an empty complex and ending at an empty complex. Assuming we have temporal data for $T$ time steps and that $L < T$ where $L$ denotes the number of different spatial scales, each iteration in the binary search requires $O(N^\omega)$ time since each iteration of the binary search involves computing zigzag persistence of a zigzag filtration of length $O(N)$. Here $\omega < 2.37286$ is the matrix multiplication exponent. Hence, the binary search in total requires $O(N^\omega \log T)$ time per center point. If we sample $s$ center points, then the total time complexity would be $O(sN^\omega \log T)$.

\section{Experiments}
\label{sec:exp}
In this section, we report the results of testing \zzgril{} on various datasets. We begin by giving a detailed description about the experimental setup. Then, we give a brief description of the datasets, followed by the experimental results. We use the benchmark UEA multivariate time-series~\cite{ueamvts} datasets to test \zzgril{} on multivariate time series data to show that \zzgril{} can be applied to datasets from various domains. Further, we test \zzgril{} on a targeted application of sleep-stage classification by performing experiments on ISRUC-S3~\cite{isruc} dataset. In all these experiments, we augment the topological information captured by \zzgril{} to one of the specifically tailored machine learning methods on the respective datasets and compare. 
For each case, we select the machine learning model which has the highest performance to truly test and highlight the value of the topological information added by \zzgril{} to an already high-performing specifically tailored model. 

\vspace{-0.2cm}
\subsection{Experimental Setup}
\vspace{-0.2cm}
 Each data instance is a multivariate time-series which we convert into a quasi zigzag bi-filtration. The \zzgril{} framework takes in a quasi zigzag bi-filtration as input and provides a topological signature for the sequence as output. We augment the machine learning model with this topological information and train the model for classification. The framework is shown in Figure~\ref{fig:pipeline}.

\begin{figure}
    \centering
    \includegraphics[scale=0.55]{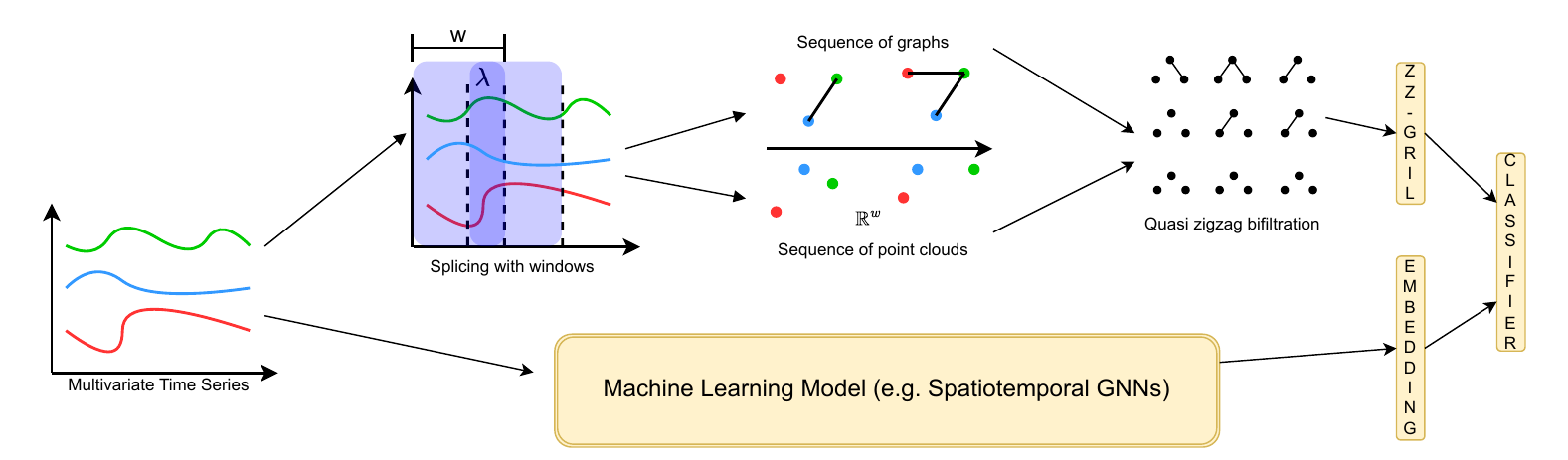}
    \caption{Experimental setup.}
    \label{fig:pipeline}
\end{figure}


\begin{table*}
    \centering
    \resizebox{\textwidth}{!}{
    \begin{tabular}{cccccccccc}
    \toprule
         \textbf{Dataset/Methods}&  ED-1NN&  DTW-1NN-I&  DTW-1NN-D&  MLSTM-FCN&  ShapeNet&  WEASEL+MUSE&   OS-CNN&MOS-CNN &\zzgril\\
         \midrule
         FingerMovements&  0.550&  0.520&  0.530&  0.580&  \textcolor{gray}{0.589}&  0.490&   0.568&0.568 &\textbf{0.590}\\
 Heartbeat& 0.620 & 0.659& 0.717& 0.663& \textbf{0.756}& \textcolor{gray}{0.727}& 0.489& 0.604 &0.721\\
 MotorImagery& 0.510&0.390 &0.500 &0.510 &\textbf{0.610} &0.500 &0.535 &0.515  &\textcolor{gray}{0.580}\\
 NATOPS& 0.860 & 0.850& 0.883& 0.889& 0.883& 0.870& \textbf{0.968}& \textcolor{gray}{0.951} &0.850\\
 SelfRegulationSCP2& 0.483 & 0.533& \textcolor{gray}{0.539}& 0.472& \textbf{0.578}& 0.460& 0.532& 0.510 &0.522\\
 \bottomrule
    \end{tabular}
    }
    \caption{Acccuracy comparison of \zzgril{} with some existing methods on UEA Multivariate Time Series Classification Datasets. Bold entries denote the best model and gray the second best. }
    \label{tab:mvts_results2}
    \vspace{-0.4cm}
\end{table*}

\begin{table}
    \centering
    \begin{tabular}{ccccc}
    \toprule
         \textbf{Dataset/Methods}&  TapNet &TapNet+\zzgril& TodyNet &TodyNet+\zzgril\\
         \midrule
         FingerMovements&  0.530 &0.630&  0.570&\textbf{0.660}\\
 Heartbeat& 0.751 &0.751&\textbf{ 0.756}&\textbf{0.756}\\
 MotorImagery&0.580&0.600&0.640 & \textbf{0.660}\\
 NATOPS& 0.927&0.922& \textbf{0.972}&0.961\\
 SelfRegulationSCP2& 0.538&0.544& 0.550&\textbf{0.600}\\
 \bottomrule
    \end{tabular}
    \caption{Test acccuracy of augmenting \zzgril{} to TapNet and TodyNet on UEA Multivariate Time Series Classification Datasets.}
    \label{tab:mvts_results}
\end{table}

Given a sample of multivariate time-series data with $m$ time-series, we splice each time-series into a \emph{sequence of time-series}, each of length $w$. This splicing is done by a moving window of width $w$, where consecutive windows have an overlap of $\lambda$. We calculate the Pearson correlation coefficient~\cite{pcc} between time-series in each window. We construct a graph for each window, where each time-series (of length $w$) is a node, and edges with the Pearson correlation values as weights connect them. Thus, we get a complete graph with $m$ nodes. We select the top $k$ percentile of these edges to build the final graph in each window. This way we obtain a sequence of graphs. The topological information of this sequence of graphs encodes the evolution of correlation between time series. Alternately, we can also track the evolving time-series by converting it into a sequence of point clouds. From the sequence of time-series described above, we consider each time-series of length $w$ as a point in $\RR^w$. Thus, if we have $m$ time series, we have $m$ points in $\RR^w$ in each window. Thus, we get a sequence of point clouds in $\RR^w$. This sequence of point clouds tracks the evolution of each time-series, and a Vietoris-Rips like construction on this point cloud tracks the evolution of the interaction between time-series. Refer to Figure~\ref{fig:pipeline} for an illustration. 


\subsection{UEA Multivariate Time Series Classification}
UEA Multivariate Time Series Classification (MTSC)~\cite{ueamvts} archive comprises of real-world multivariate time series data and is a widely recognized benchmark in time series analysis. The UEA MTSC collection encompasses a diverse range of application domains such as healthcare (ECG or EEG data), motion recognition (recorded using wearable sensors). See Table~\ref{tab:app:dataset_info} in Appendix~\ref{app:exp}. 

The datasets are preprocessed and split into training and testing sets. 
For this set of experiments, we convert the multivariate time series into a sequence of point clouds and compute \zzgril{}. We choose 5 datasets which have at least 7 multivariate time series. This ensures that each point cloud, in the sequence of point clouds, has at least 7 points, giving meaningful topological information. Refer to Figure~\ref{fig:zz_gril_vis} in Appendix~\ref{app:exp} for a visualization of \zzgril{} on FingerMovements dataset.

In Table~\ref{tab:mvts_results}, we compare the performance of augmenting \zzgril{} to two specifically tailored machine learning models for multivariate time series classification on UEA MTSC datasets, TapNet~\cite{tapnet} and TodyNet~\cite{todynet}. We can see from the table that \zzgril{} is adding meaningful topological information on most datasets which is seen as an improvement in performance. We picked these two models for augmenting \zzgril{} because of two reasons: (i) availability and ease-of-use of codebase, (ii) these models already have good performance on UEA MTSC Datasets and we wanted to test the additional value of the topological information that \zzgril{} adds. In order to do a comprehensive evaluation of \zzgril{} framework, we compare \zzgril{} in isolation with other methods~\cite{li2021,tang2022,karim2019,schafer2017} on UEA MTSC task. We report the results in Table~\ref{tab:mvts_results2}. We see that \zzgril{} has comparable performance. Here, we would like to remind the reader that \zzgril{} is a general framework which does not need to be tailored for these datasets in particular. 
Further, to show that we need both the spatial and the temporal information to be captured together, we compare \zzgril{} with standard \emph{zigzag persistence} at various spatial scales. We report the results in Table~\ref{tab:qzz_vs_zz}. We can see that \zzgril{} performs better than standard zigzag persistence even across different scales.


\begin{table}
    \centering
    \begin{tabular}{ccccc}
    \toprule
         \textbf{Dataset}&  Zigzag (scale = 0.5)&  Zigzag (scale = 0.7)& Zizag (full graph) &\zzgril\\
         \midrule
         FingerMovements&  0.490&  0.490&  0.530&\textbf{0.590}\\
         Heartbeat&  - &  0.682&  0.687&\textbf{0.721}\\
         MotorImagery&  0.510&  0.520&  0.540&\textbf{0.580}\\
         NATOPS&  -&  0.805&  0.816&\textbf{0.850}\\
 SelfRegulationSCP2& 0.483& 0.489& 0.500&\textbf{0.522}\\
 \bottomrule
    \end{tabular}
    \caption{Accuracy of 1-parameter zigzag persistence at different scales compared with \zzgril{}. We use a Random Forest Classifier for classifying the topological signatures. The empty entries denote that the model does not train on the topological features because the features are not distinct enough at that scale.}
    \label{tab:qzz_vs_zz}
\end{table}

We conduct two ablation studies. First, we find that converting time series into either, a sequence of point clouds or a sequence of graphs, is viable for extracting topological information as observed in Table~\ref{tab:app_exp_res}). Second, to highlight the value of dynamic topological information, we show that \zzgril{} outperforms a ``Snapshot PH'' baseline, which vectorizes individual non-zigzag persistence modules. As shown in Table~\ref{tab:app:snapshot_ph_vs_zz}, \zzgril{}'s superior performance confirms the importance of the features captured by zigzag persistence.

\begin{table}[htbp]
    \centering
    \begin{tabular}{ccc}
    \toprule
        \textbf{Dataset} & \textbf{TodyNet + \zzgril{} point clouds} & \textbf{TodyNet + \zzgril{} graphs}\\
        \midrule
         FingerMovements& 0.660 & 0.680\\
         NATOPS &0.961 & 0.945 \\
         SelfRegulationSCP2 &0.600 & 0.594\\
         \bottomrule
    \end{tabular}
    \caption{Comparison of converting multivariate time series as a sequence of point clouds versus converting it as a sequence of graphs to extract the topological information.}
    \label{tab:app_exp_res}
\end{table}

\begin{table}[htbp]
    \centering
    \begin{tabular}{ccc}
    \toprule
        \textbf{Dataset} & \textbf{Snapshot PH}& \textbf{\zzgril}\\
        \midrule
         FingerMovements& 0.480& 0.590\\
 Heartbeat& 0.639&0.721\\
 MotorImagery& 0.560&0.580\\
         NATOPS &0.777& 0.850\\
         SelfRegulationSCP2 &0.466& 0.522\\
         \bottomrule
    \end{tabular}
    \caption{Comparison of vectorizing each non-zigzag filtration individually (Snapshot PH) versus \zzgril{}.}
    \label{tab:app:snapshot_ph_vs_zz}
\end{table}

 We report the computation times in Table~\ref{tab:app:comp_times} which indicate that \zzgril{} is, indeed, practical to use.

 \begin{table}[h!]
    \centering
    \begin{tabular}{ccc}
    \toprule
        \textbf{Dataset} & \textbf{Time (\zzgril{} point clouds)} & \textbf{Time (\zzgril{} graphs)}  \\
        \midrule
        FingerMovements &569s &138s \\
        NATOPS & 413s & 68s \\
        SelfRegulationSCP2 & 153s  & 27s \\
        \bottomrule
    \end{tabular}
    \caption{\small Computation times for \zzgril{}. The values represent the computation times for both training and testing splits. All the experiments were performed on Intel(R) Xeon(R) Gold 6248R CPU and NVIDIA Quattro RTX 6000 GPU.}
    \label{tab:app:comp_times}
\end{table}

\subsection{Sleep Stage Classification}
We use ISRUC-S3 dataset, which is a part of the ISRUC (Iberian Studies and Research on Sleep) Sleep Dataset~\cite{isruc}. 
ISRUC-S3 contains PSG recordings from 10 subjects. Each recording includes multiple physiological signals, such as: EEG, ECG. The dataset is annotated with sleep stage labels for each epoch (30 second window). There are 5 sleep stage labels: Wake (W), N1, N2, N3 (non-REM stages) and REM. We use  STDP-GCN~\cite{stdpgcn} as the machine learning model to augment. In this experiment, we convert the time series into sequence of graphs and compare the performance
in Table~\ref{tab:sleep_classi_results}. We can see that augmenting \zzgril{} increases both the accuracy and the overall F-1 score.

\begin{table}
\vspace{-0.4cm}
    \centering
    \resizebox{\columnwidth}{!}{
    \begin{tabular}{cccccccc}
    \toprule
         \textbf{Methods}&  \textbf{Accuracy}&  \textbf{F-1 Overall}&  \textbf{F-1 Wake}&  \textbf{F-1 N-1}&  \textbf{F-1 N2}&  \textbf{F-1 N3}& \textbf{F-1 REM}\\
         \midrule
         SVM~\cite{alickovic2018}&  73.3&  72.1&  86.8&  52.3&  69.9&  78.6& 73.1\\
         RF~\cite{memar2017}&  72.9&  70.8&  85.8&  47.3&  70.4&  80.9& 69.9\\
         MLP+LSTM~\cite{dong2017}&  77.9&  75.8&  86.0&  46.9&  76.0&  87.5& 82.8\\
         CNN+BiLSTM~\cite{supratak2017}&  78.8&  77.9&  88.7&  60.2&  74.6&  85.8& 80.2\\
         CNN~\cite{chambon2018}&  78.1&  76.8&  87.0&  55.0&  76.0&  85.1& 80.9\\
         ARNN+RNN~\cite{phan2019}&  78.9&  76.3&  83.6&  43.9&  79.3&  87.9& 86.7\\
 STGCN~\cite{jia2020}& 79.9& 78.7& 87.8& 57.4& 77.6& 86.4&84.1\\
 MSTGCN~\cite{jia2021}& 82.1& 80.8& \textbf{89.4}& 59.6& 80.6&\textbf{89.0}&85.6\\
 STDP-GCN~\cite{stdpgcn}& 82.6& 81.0& 83.5& \textbf{62.9}& 83.1& 86.0&90.6\\
 STDP-GCN + \zzgril{}& \textbf{83.8}& \textbf{81.1}& 88.6& 58.1&\textbf{85.4} &82.7 &\textbf{90.9}\\
 \bottomrule
    \end{tabular}}
    \caption{ Accuracy and F-1 scores of augmenting \zzgril{} to STDP-GCN and testing on ISRUC-S3 sleep classification dataset. We report the accuracy and F-1 scores for each model.}
    \label{tab:sleep_classi_results}
\end{table}

We would like to clarify that the aim, for both sets of experiments, is primarily to show that an increase in accuracy upon augmentation signifies that \zzgril{} captures meaningful topological information which can be used to improve the existing models.


\section{Conclusion}
In this paper, we proposed QZPH as a framework to capture both static and dynamic topological features in time-varying data. We proposed \zzgril{}, a stable and computationally efficient topological invariant to address the challenges of integrating MPH and ZPH. Through applications in various domains, including sleep-stage detection, we showed that augmenting machine learning models with \zzgril{} improves the performance. These results highlight the potential of integrating topological information to address complex challenges while analyzing time-evolving data. 


\section{Acknowledgement}
This work is partially supported by NSF grants DMS-2301360 and CCF-2437030. We acknowledge the discussion with Michael Lesnick who pointed out the theory
about initial/terminal functors in the context of limits and colimits.

\newpage
\bibliography{ref}
\bibliographystyle{icml2025}


\newpage
\appendix

\section{Formal definitions and proofs for stability of \zzgril{}}
\label{app:proofs}



There is a notion of proximity on the space of zigzag modules in terms of the \emph{interleaving distance}. In~\cite{zz_stability}, the authors define interleaving distance between two zigzag modules by including them into $\RR^{\mathsf{op}} \times \RR$-indexed modules.

The interleaving distance on $\RR^n$-indexed persistence modules is known and well-defined. We briefly recall the definition here. We refer the readers to~\cite{zz_stability} for additional details.

\begin{definition}[$u$-shift functor]
    The $u$-shift functor $(-)_u \colon \mathbf{vec}^{\RR^n} \to \mathbf{vec}^{\RR^n}$, for $u \in \RR^n$, is defined as follows:
    \begin{enumerate}
        \item For $M \in \mathbf{vec}^{\RR^n}$, $M_u$ is defined as $M_u(x) = M(x+u)$ for all $x \in \RR^n$ and $M_u(x_1 \leq x_2) = M(x_1 + u \leq x_2 + u)$ for all $x_1 \leq x_2 \in \RR^n$, 
        \item Let $M,N \in \mathbf{vec}^{\RR^n}$. Let $F \colon M \to N$ be a morphism. Then, the corresponding morphism $F_u \colon M_u \to N_u$ is defnied as $F_u(x) = F(x+u) \colon M_u(x) \to N_u(x)$ for all $x \in \RR^n$.
    \end{enumerate}
\end{definition}

\begin{definition}[$\epsilon$-interleaving]
    Let $M, N \in \mathbf{vec}^{\RR^n}$. Let $\epsilon \in [0,\infty)$ be given. We will denote $(-)_{\bm{\epsilon}}$ to be the shift functor corresponding to the vector $\bm{\epsilon} = \epsilon(1,1,\hdots,1)$. We say $M$ and $N$ are $\epsilon$-interleaved if there are natural transformations $F \colon M \to N_{\bm{\epsilon}}$ and $G \colon N \to M_{\bm{\epsilon}}$ such that 
    \begin{enumerate}
        \item $G_{\bm{\epsilon}} \circ F = \varphi_M^{2\bm{\epsilon}} $,
        \item $F_{\bm{\epsilon}} \circ G = \varphi_N^{2\bm{\epsilon}}$,
    \end{enumerate}
    where $\varphi_M^u \colon M \to M_u$ is the natural transformation whose restriction to each $M(x)$ is the linear map $M(x \leq x+ u)$ for all $x \in \RR^n$.
\end{definition}

\begin{definition}[Interleaving distance]
    Let $M, N \in \mathbf{vec}^{\RR^n}$. The interleaving distance $d_\mathcal{I}(M,N)$ between $M$ and $N$ is defined as
    \begin{equation*}
        d_\mathcal{I}(M,N) \coloneqq \inf \{\epsilon \geq 0 \colon M \text{ and } N \text{ are } \epsilon\text{-interleaved} \}
    \end{equation*}
    and $d_\mathcal{I}(M,N) = \infty$ if there exists no interleaving.
    \label{def:interleaving}
\end{definition}

We need the following two definitions to define interleaving distance between two zigzag modules. 

\begin{definition}[Left Kan Extension]
    Let $P$ and $Q$ be two posets. Let $F \colon P \to Q$ be a functor. Let $P[F \leq q]$ denote the set $P[F \leq q] \coloneqq \{p \in P \colon F(p) \leq q \}$. Given a persistence module $M \colon P \to \mathbf{vec}$, the \emph{left Kan extension} of $M$ along $F$ is a functor $\text{Lan}_F(M) \colon Q \to \mathbf{vec}$ given by 
    \begin{equation*}
        \text{Lan}_F(M)(q) \coloneqq \varinjlim M|_{P[F \leq q]},
    \end{equation*}
    along with internal morphisms given by the universality of colimits.
\end{definition}

\begin{definition}[Block Extension Functor~\cite{zz_stability}]
    Let $\uu \subset \ropr$ denote the poset $\mathbb{U} \coloneqq \{(a,b) \colon a \leq b\}$. Let $\mathfrak{i} \colon \zz \to \RR^{\mathsf{op}} \times \RR$ denote the inclusion. Let $(-)|_\uu \colon \mathbf{vec}^{\ropr} \to \mathbf{vec}^{\uu}$ denote the restriction. Then, the \emph{block extension functor} $E \colon \mathbf{vec}^{\zz} \to \mathbf{vec}^\uu$ is defined as 
    \begin{equation*}
        E \coloneqq (-)|_\uu \circ \text{Lan}_\mathfrak{i}(\circ).
    \end{equation*}
\end{definition}


\begin{definition}[Interleaving distance on zigzag modules]
    Let $M,N$ be two zigzag modules. Then
    \begin{equation*}
        d_\mathcal{I}(M,N) \coloneqq d_\mathcal{I}(E(M), E(N)).
    \end{equation*}
\end{definition}

We extend this definition of interleaving distance to \qzzmod s.  Let $\uuu \subset \ropr \times \RR$ be the analog of $\uu$, i.e, $\uuu \coloneqq \{ (a,b,c) \colon a\leq b \}$. Let $\Tilde{\mathfrak{i}} \colon \zz \times \z \to \ropr \times \RR$ denote the inclusion. Then, define $\Tilde{E} \coloneqq (-)|_\uuu \circ \text{Lan}_{\Tilde{\mathfrak{i}}}(\circ)$, analogous to the block extension functor.

\begin{definition}
    Let $M, N$ be two \qzzmod s. Then, the interleaving distance between $M$ and $N$ is given by 
    \begin{equation*}
        d_\mathcal{I}(M,N) \coloneqq d_\mathcal{I}(\Tilde{E}(M), \Tilde{E}(N)).
    \end{equation*}
\end{definition}

\begin{lemma}\label{alem:interval}
    $\Tilde{E}$ sends interval modules on $\zz \times \z$ to block interval modules.
\end{lemma}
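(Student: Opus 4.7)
The plan is to exploit that $\Tilde{\mathfrak{i}} = \mathfrak{i} \times \mathrm{id}_\z$ is a product of functors, so the left Kan extension along $\Tilde{\mathfrak{i}}$ admits a Fubini-style decomposition, reducing the claim to the known block extension result for ordinary zigzag modules. In particular, since $\uuu = \uu \times \RR$, I expect the block structure on the target to inherit from the known zigzag blocks in $\uu$, extended trivially in the $\RR$-factor.

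First, I would unfold the definition. For $(a,b,c) \in \uuu$, the comma category for the Kan extension decouples:
\begin{equation*}
(\zz \times \z)[\Tilde{\mathfrak{i}} \leq (a,b,c)] \;=\; \zz[\mathfrak{i} \leq (a,b)] \;\times\; \{k \in \z : k \leq c\}.
\end{equation*}
Consequently, $\Tilde{E}(k_I)(a,b,c) = \varinjlim \, k_I|_{D_{(a,b,c)}}$, and the value is $k^n$ where $n$ counts the connected components of $I \cap D_{(a,b,c)}$ regarded as a subposet of $D_{(a,b,c)}$.

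Second, I would treat the product case. For $I = I_1 \times I_2 \subseteq \zz \times \z$ with $I_1 \subseteq \zz$ and $I_2 \subseteq \z$ intervals, the standard Fubini theorem for left Kan extensions into a cocomplete target yields
\begin{equation*}
\Tilde{E}(k_{I_1 \times I_2}) \;\cong\; E(k_{I_1}) \boxtimes \mathrm{Lan}_{\z \hookrightarrow \RR}(k_{I_2}),
\end{equation*}
where $\boxtimes$ denotes the external tensor product of $\uu$-indexed and $\RR$-indexed modules restricted to $\uuu$. The first factor is a zigzag block in $\uu$ by the Botnan–Lesnick result; the second factor is a step-function extension of $k_{I_2}$, which is supported on an interval in $\RR$ with identity internal maps. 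The external tensor product of a block in $\uu$ with an interval module on $\RR$ is, by definition, a block interval module on $\uuu$.

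Third, I would reduce the general case to the product case. For each $c \in \z$, let $I_c = \{(i,j) \in \zz : ((i,j),c) \in I\}$ denote the slice. Convexity and connectedness of $I$ in $\zz \times \z$ imply that the set of $c$ for which $I_c$ is nonempty is an interval in $\z$, and that the slices $I_c$ evolve coherently (each is a convex connected subposet of $\zz$, and consecutive nonempty slices are comparable under inclusion when combined with the product order). I would then argue that, after taking the downward closure used in the Kan extension, only the union $\bigcup_{c' \leq c} I_{c'}$ matters at height $c$, and this union is itself an interval of $\zz$ by the comparability of slices. This collapses the general computation to a family of product computations indexed by $c$, each contributing a zigzag block by the previous step.

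The main obstacle will be the third step: showing that $I \cap D_{(a,b,c)}$ is empty or connected within $D_{(a,b,c)}$, so that the colimit is at most one-dimensional and the resulting extension genuinely has block support. Intervals in $\zz \times \z$ need not remain intervals under the $\z^2$ identification (as the paper emphasizes), and a convex connected $I$ can have slices that shift and rejoin in subtle ways. The crucial observation to leverage is that the comma category $D_{(a,b,c)}$ is itself a product of a downset in $\zz$ with a downset in $\z$, which is ``chain-rich'' enough that any two points of $I \cap D_{(a,b,c)}$ can be linked by a zigzag path lying entirely in $I \cap D_{(a,b,c)}$. Once connectedness is established, naturality of colimits forces the internal maps of $\Tilde{E}(k_I)$ to be identities on the support, matching the definition of a block interval module on $\uuu$.
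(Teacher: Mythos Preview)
The paper does not prove this lemma; it simply writes ``This is analogous to Lemma 4.1 in~\cite{zz_stability}'' and moves on. Your Steps~1--2, handling product intervals $I_1\times I_2$ via a Fubini decomposition for left Kan extensions together with the Botnan--Lesnick block classification on the $\zz$-factor, are sound and already go further than the paper.

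Your Step~3, however, cannot be completed as planned, because the statement fails for general intervals of $\zz\times\z$. Take
\[
I=\{((0,0),0),\,((2,2),0)\}\cup\{((0,0),1),\,((1,0),1),\,((1,1),1),\,((2,1),1),\,((2,2),1)\},
\]
two isolated peaks at height~$0$ joined by the full zigzag segment at height~$1$. One checks that $I$ is connected and convex in $\zz\times\z$: peaks are maximal in $\zz$, so the only nontrivial comparabilities in $I$ are the two vertical ones and the four valley--peak relations at height~$1$, and each has no intermediate point outside $I$. Now evaluate $\Tilde{E}(k_I)$ at $(a,b,c)=(-\tfrac12,\tfrac52,\tfrac12)\in\uuu$. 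The comma poset $D_{(a,b,c)}$ sees only heights $k\le 0$, so $I\cap D_{(a,b,c)}=\{((0,0),0),((2,2),0)\}$: two incomparable maximal elements of $D_{(a,b,c)}$, each its own upward-closed component, giving $\varinjlim k_I|_{D_{(a,b,c)}}\cong k^2$. Thus $\Tilde{E}(k_I)$ is not pointwise at most one-dimensional and hence not a block interval module. Your union-of-slices heuristic misses that connectedness of $I$ does not force connectedness of $I\cap D_{(a,b,c)}$, and your colimit description omits the killing of components admitting an upper bound in $D_{(a,b,c)}\setminus I$. The paper's one-line analogy does not address this either; a correct version of the lemma must restrict the class of intervals --- e.g.\ to products, or to the subposets in $\mathbf{I}(\zz\times\z)$ that are intervals for the $\z^2$ order, under which the $I$ above is not convex.
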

This is analogous to Lemma 4.1 in~\cite{zz_stability}.


\begin{lemma}\label{alem:decomposition}
    Let $M$ be a \qzzmod. If $M \cong \bigoplus_{k\in K} I_k$, then $\Tilde{E}(M) \cong \bigoplus_{k \in K} \Tilde{E}(I_k)$, where $K$ is an indexing set.
\end{lemma}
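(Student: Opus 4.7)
The plan is to exploit the fact that the block extension functor $\Tilde{E}$ is a composition of two functors, namely the left Kan extension $\text{Lan}_{\Tilde{\mathfrak{i}}}$ along the inclusion $\Tilde{\mathfrak{i}} \colon \zz \times \z \hookrightarrow \ropr \times \RR$ followed by restriction $(-)|_\uuu$, and that both of these components preserve arbitrary coproducts. Since the direct sum in the functor category $\mathbf{vec}^{\zz \times \z}$ is precisely the coproduct, this will immediately yield the required isomorphism.

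First, I would invoke the standard result that the left Kan extension $\text{Lan}_{\Tilde{\mathfrak{i}}}$ is left adjoint to the restriction functor $\Tilde{\mathfrak{i}}^{*} \colon \mathbf{vec}^{\ropr \times \RR} \to \mathbf{vec}^{\zz \times \z}$. The pointwise colimit formula used to define $\text{Lan}_{\Tilde{\mathfrak{i}}}$ is well-posed because $\mathbf{vec}$ is cocomplete, which also ensures the adjunction. Being a left adjoint, $\text{Lan}_{\Tilde{\mathfrak{i}}}$ preserves all colimits, and in particular coproducts, so
\begin{equation*}
    \text{Lan}_{\Tilde{\mathfrak{i}}}\!\left(\bigoplus_{k \in K} I_k\right) \;\cong\; \bigoplus_{k \in K} \text{Lan}_{\Tilde{\mathfrak{i}}}(I_k).
\end{equation*}

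Second, the restriction functor $(-)|_\uuu$ preserves arbitrary direct sums: coproducts in a functor category valued in $\mathbf{vec}$ are computed pointwise, so for every $u \in \uuu$ one has $\bigl(\bigoplus_{k} N_{k}\bigr)(u) = \bigoplus_{k} N_{k}(u)$ with matching structure maps. Composing the two observations yields
\begin{equation*}
    \Tilde{E}\!\left(\bigoplus_{k \in K} I_k\right) \;=\; \left.\text{Lan}_{\Tilde{\mathfrak{i}}}\!\left(\bigoplus_{k} I_k\right)\right|_\uuu \;\cong\; \bigoplus_{k \in K} \Tilde{E}(I_k),
\end{equation*}
which is the desired conclusion.

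I do not anticipate any serious obstacle. The p.f.d.\ assumption on $M$ is invoked only to justify the existence of an interval decomposition $M \cong \bigoplus_{k} I_k$ via the appropriate structure theorem; once such a decomposition is given, the argument above is purely formal and works for $K$ of any cardinality. If anything warrants extra care, it is the adjointness claim for $\text{Lan}_{\Tilde{\mathfrak{i}}}$ when the source poset is not small in the usual sense, but since $\mathbf{vec}$ admits all small (co)limits and the relevant comma categories have small cofinal subcategories, the pointwise Kan extension and its adjointness go through without modification.
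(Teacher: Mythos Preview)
Your proposal is correct and follows essentially the same approach as the paper: both argue that $\text{Lan}_{\Tilde{\mathfrak{i}}}$ preserves direct sums because it is left adjoint to restriction, and that the restriction $(-)|_{\uuu}$ preserves direct sums as well. The paper's proof additionally cites Lemma~\ref{alem:interval}, but that lemma is not actually needed for the statement as written (it only becomes relevant when this decomposition is applied later), so your omission of it is harmless and arguably cleaner.
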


\begin{proof}
    The functor $\text{Lan}_{\Tilde{\mathfrak{i}}}$ preserves direct sums~\cite{Saunders_Maclane_Cat_Theory}, as it is left-adjoint to the canonical restriction functor. The restriction functor also preserves direct sums. These two facts combined with Lemma~\ref{alem:interval} give the result.
\end{proof}

\begin{definition}
\label{adef:erosion_dist}
    Let \textbf{I}$(\zz \times \z)$ denote the collection of all subposets in $\zz \times \z$ such that their corresponding subposets in $\z^2$ are intervals. Let $M$ and $N$ be two \qzzmod s. The erosion distance is defined as:
    \begin{equation*}
    \begin{split}
        d_\mathcal{E}(M,N) \coloneqq \inf \limits_{\epsilon \geq 0} 
        \{ & \forall I \in \textbf{I}(\zz \times \z), \\
        & \RKG^M(I) \geq \RKG^N(I^\epsilon) \text{ and } \\
        & \RKG^N(I) \geq \RKG^M(I^\epsilon)  \}
    \end{split}
    \end{equation*}
\end{definition}

\begin{definition}
    Let $\mathcal{L}$ denote the collection of all worms in $\zz \times \z$. Let $M$ and $N$ be \qzzmod s. The erosion distance can be defined as:
    \begin{equation*}
    \begin{split}
        d_\mathcal{E}^\mathcal{L}(M,N) \coloneqq \inf \limits_{\epsilon \geq 0} 
        \{ & \forall \boxed{\vp}^2_\delta \in \textbf{I}(\zz \times \z), \\
        & \RKG^M\left (\boxed{\vp}^2_\delta\right) \geq \RKG^N\left(\boxed{\vp}^2_{\delta+\epsilon}\right ) \text{ and } \\
        & \RKG^N\left (\boxed{\vp}^2_\delta\right ) \geq \RKG^M\left (\boxed{\vp}^2_{\delta+\epsilon}\right)  \}.
    \end{split}
    \end{equation*}
\end{definition}


\begin{propositionof}{\ref{prop:interleaving}} \label{aprop:interleaving}
    Given two \qzzmod s $M$ and $N$, $d_\mathcal{E}^{\mathcal{L}}(M,N) \leq d_\mathcal{I}(M,N)$ where $d_\mathcal{I}$ denotes the interleaving distance between $M$ and $N$.
\end{propositionof}

\begin{proof}
First, it is obvious that $d_\mathcal{E}^{\mathcal{L}}(M,N) \leq d_\mathcal{E}(M,N)$. Now, $d_\mathcal{I}(M,N) \coloneqq d_\mathcal{I}(\Tilde{E}(M), \Tilde{E}(N))$. Let $I$ be a subposet in $\zz \times \z$ such that its corresponding subposet in $\z^2$ is an interval.  By Lemma~\ref{alem:interval} and Lemma~\ref{alem:decomposition}, and the fact that generalized rank over a given subposet counts the number of intervals 
that contain the given subposet, we get $\RKG^M(I) = \RKG^{\Tilde{E}(M)}(\Tilde{\mathfrak{i}}(I) \cap \uuu)$. This gives us $d_\mathcal{E}(M,N) = d_\mathcal{E}(\Tilde{E}(M),\Tilde{E}(N))$. In~\cite{GenRankKim21}, the authors show that $d_\mathcal{E}(V,W) \leq d_\mathcal{I}(V,W)$, where $V,W \colon \RR^n \to \textbf{vec}$ are $\RR^n$-indexed persistence modules. Thus, we get $d_\mathcal{E}^{\mathcal{L}}(M,N) \leq d_\mathcal{E}(M,N) = d_\mathcal{E}(\Tilde{E}(M),\Tilde{E}(N)) \leq d_\mathcal{I}(\Tilde{E}(M),\Tilde{E}(N)) = d_\mathcal{I}(M,N)$. 

\end{proof}



\begin{theoremof}{\ref{thm:stability_zzgril}}\label{aprop:stability_zzgril}
    Let $M$ and $N$ be two \qzzmod s. Let $\lambda^M$ and $\lambda^N$ denote the \zzgril{} functions of $M$ and $N$  respectively. Then, 
    \begin{equation*}
        ||\lambda^M - \lambda^N||_\infty = d_\mathcal{E}^{\mathcal{L}}(M,N) \leq d_\mathcal{I}(M,N).
    \end{equation*}
\end{theoremof}

\begin{proof}
    We show that $||\lambda^M - \lambda^N||_\infty = d_\mathcal{E}^\mathcal{L}(M,N)$. 
    
    To see $||\lambda^M - \lambda^N||_\infty \leq d_\mathcal{E}^\mathcal{L}(M,N)$, fix $\vp, k$, and let $\lambda^M(\vp,k) = \delta_M$ and $\lambda^N(\vp,k) = \delta_N$. WLOG, assume $\delta_M \geq \delta_N$. Let $d_\mathcal{E}^\mathcal{L}(M,N) = \epsilon$. Therefore, by definition of $d_\mathcal{E}^\mathcal{L}$, we have $\RKG^M\left (\boxed{\vp}^2_{\delta_M}\right) = k$ and $\RKG^M\left (\boxed{\vp}^2_{\delta_N + \epsilon}\right) \leq \RKG^N\left (\boxed{\vp}^2_{\delta_N}\right) = k$. Thus, by the definition of \zzgril{}, we get $\delta_N + \epsilon \geq \delta_M$, i.e., $\delta_M - \delta_N \leq \epsilon = d^\mathcal{L}_\mathcal{E}$.

    To see $||\lambda^M - \lambda^N||_\infty \geq d_\mathcal{E}^\mathcal{L}(M,N)$, fix $\vp, k$, and let $|\lambda^M(\vp,k) - \lambda^N(\vp,k)| = \epsilon$. Let $\boxed{\vp}^2_\delta$ be a worm. Let $k = \RKG^N\left (\boxed{\vp}^2_{\delta + \epsilon}\right)$. Then, $\lambda^N(\vp,k) \geq \delta + \epsilon$. Thus, $|\lambda^M(\vp,k) - \lambda^N(\vp,k)| = \epsilon$ and $\lambda^N(\vp,k) \geq \delta + \epsilon$ give $\lambda^M(\vp,k) \geq d$. Therefore, $\RKG^M\left (\boxed{\vp}^2_{\delta}\right) \geq k = \RKG^N\left (\boxed{\vp}^2_{\delta + \epsilon}\right)$. Similarly, we can show $\RKG^N\left (\boxed{\vp}^2_{\delta}\right) \geq  \RKG^M\left (\boxed{\vp}^2_{\delta + \epsilon}\right)$. Thus, by definition of $d_\mathcal{E}^\mathcal{L}$, we get that $||\lambda^M - \lambda^N||_\infty \geq d_\mathcal{E}^\mathcal{L}(M,N)$.
    
\end{proof}

\section{Experimental Details}

\label{app:exp}
Here, we report additional details about our experiments. We begin by including a detailed description of the UEA datasets in Table~\ref{tab:app:dataset_info}. Further, we have a comparison of treating a multivariate time series as a sequence of point clouds versus treating it as a sequence of graphs as the input to the \zzgril{} framework. We report the results of this experiment in Table~\ref{tab:app_exp_res}. We can see that there is no clear winner. On some datasets, \zzgril{} extracts more relevant information from sequences of point clouds while on some datasets, information from sequences of graphs performs better. We have a visualization of \zzgril{} in Figure~\ref{fig:zz_gril_vis}. We can see in the figure that the \zzgril{} topological signature is different for samples of different classes and very similar for samples in the same class.

For all the experiments with sequence of point clouds, we choose a window size that is $\max(5, series\_length//128)$ and overlap $\max(4, 0.7*window\_size)$. For the experiments with sequence of graphs, we choose a window size of $\min(series\_length/5,128)$ and an overlap of $0.7*window\_size$. Then, from the complete graph, we randomly choose a percentage between 65 and 75 of the edges depending on their weights. This is to ensure that the number of edges does not remain the same for all the graphs in the sequence. For all our experiments, we use 36 center points to compute \zzgril{} at. For model parameters, mostly, we use the same parameters as specified by the respective models~\cite{todynet, stdpgcn}. We notice that the training is sensitive to learning rate and we optimize the learning rate between $1e-3$ and $5e-5$ for different datasets. These choice of hyperparameters are based on preliminary experiments, the results of which we report in Table~\ref{tab:app:hyperparam1}, Table~\ref{tab:app:hyperparam2} , Table~\ref{tab:app:hyperparam3} and Table~\ref{tab:app:hyperparam4}. The results reported here are for TodyNet+\zzgril{} model.

\begin{table}[htbp]
    \centering
    \resizebox{\columnwidth}{!}{
    \begin{tabular}{ccccccc}
    \toprule
         \textbf{Dataset}&  \textbf{Type}&  \textbf{No. of series}&  \textbf{Series length}& \textbf{No. of classes} &\textbf{Train size} &\textbf{Test size}\\
         \midrule
 FingerMovements& EEG& 28& 50& 2& 316&100\\
         Heartbeat&  AUDIO&  61&  405&  2& 204&205\\
 MotorImagery& EEG& 64& 3000& 2& 278&100\\
         NATOPS&  HAR&  24&  51&  6& 180&180\\
         SelfRegulationSCP2&  EEG&  7&  1152&  2& 200&180\\
         \bottomrule
    \end{tabular}}
    \caption{Information about UEA Datasets used for experiments}
    \label{tab:app:dataset_info}
\end{table}

\begin{table}[h!]
    \centering
    \begin{tabular}{cccc}
    \toprule
        \textbf{Dataset} & \textbf{36 center points} & \textbf{25 center points} & \textbf{64 center points}\\
        \midrule
         FingerMovements& 0.660 & 0.620 & 0.680 \\
         SelfRegulationSCP2 &0.600 & 0.578 & 0.600 \\
         \bottomrule
    \end{tabular}
    \caption{Selection of number of center points to compute \zzgril{}}
    \label{tab:app:hyperparam1}
\end{table}

\begin{table}[h!]
    \centering
    \resizebox{\columnwidth}{!}{
    \begin{tabular}{cccc}
    \toprule
        \textbf{Dataset} & $\mathbf{max(0.7 * window\_size,5)}$  & $\mathbf{max(0.3 * window\_size,2)}$ & $\mathbf{max(0.5 * window\_size,3)}$\\
        \midrule
         FingerMovements& 0.660 & 0.620 & 0.640\\
         SelfRegulationSCP2 &0.600 & 0.550 & 0.578\\
         \bottomrule
    \end{tabular}}
    \caption{Selection of overlap between time windows for time-series modelled as a sequence of graphs. The $window\_size$ is fixed to be $max(128//series\_len, 5$).}
    \label{tab:app:hyperparam2}
\end{table}

\begin{table}[h!]
    \centering
    \resizebox{\columnwidth}{!}{
    \begin{tabular}{cccc}
    \toprule
        \textbf{Dataset} & $\mathbf{max(series\_len//128,5)}$ & $\mathbf{max(series\_len//64,10)}$ & $\mathbf{max(series\_len//256,3)}$\\
        \midrule
         FingerMovements& 0.660 & 0.640 & 0.580\\
         SelfRegulationSCP2 &0.600 & 0.578 & 0.550\\
         \bottomrule
    \end{tabular}}
    \caption{Selection of window size for time-series modelled as a sequence of graphs. The overlap is fixed to be $max(4, 0.7 * window\_size)$.}
    \label{tab:app:hyperparam3}
\end{table}

\begin{table}[h!]
    \centering
    \resizebox{\columnwidth}{!}{
    \begin{tabular}{cccc}
    \toprule
        \textbf{Dataset} & \textbf{Threshold (45\% - 55\%)}& \textbf{Threshold (65\% - 75\%)}& \textbf{Threshold (85\% - 95 \%)}\\
        \midrule
         FingerMovements& 0.600& 0.660& 0.630\\
         SelfRegulationSCP2 &0.550& 0.600& 0.550\\
         \bottomrule
    \end{tabular}}
    \caption{Selection of thresholding value range for retaining edges in the graph.}
    \label{tab:app:hyperparam4}
\end{table}

\begin{figure}[htbp]
    \centering
    \includegraphics[scale=0.8]{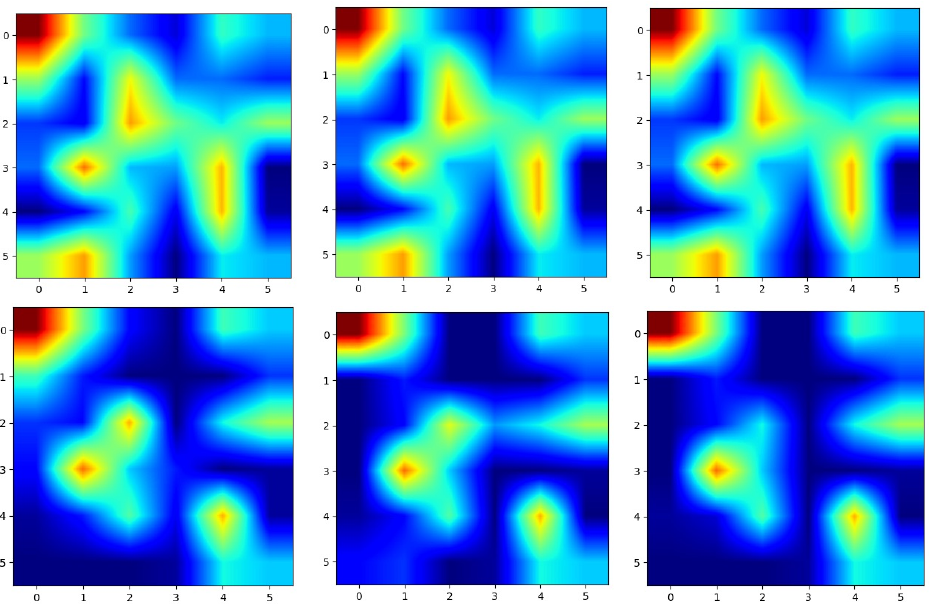}
    \caption{Heatmap of \zzgril{} on the FingerMovements dataset. \zzgril{} is computed at 36 center points, which is plotted as a 6x6 grid. The top row represents three samples with label 1 and the bottom row represents three samples with label 0. We can see the similarity in the \zzgril{} signatures between samples of the same class and clear differences between samples of different classees.}
    \label{fig:zz_gril_vis}
\end{figure}


\end{document}